\newtheorem{theorem}{Theorem}
\newtheorem{lemma}{Lemma}
\newtheorem{proposition}{Proposition}  
\theoremstyle{definition}
\newtheorem{definition}{Definition}
\newtheorem{remark}{Remark}
\DeclareMathOperator*{\argmin}{arg\,min}
\begin{document}

\title{Dynamic Safety in Complex Environments: 
\\ Synthesizing Safety Filters with Poisson's Equation}

\author{Gilbert Bahati, Ryan M. Bena, and Aaron D. Ames.

\authorblockA{\\
Department of Mechanical and Civil Engineering,\\ California Institute of Technology, \\ Pasadena, CA, USA.\\
\small{E-mails:\,$\{${gbahati}, {ryanbena}, {ames}$\}${@caltech.edu}}
}
}

\maketitle

\newcommand{\naturals}{\mathbb{N}}
\newcommand{\re}{\mathbb{R}}
\newcommand{\R}{\mathbb{R}}
\newcommand{\realnonneg}{\mathbb{R}_{\ge 0}}
\newcommand{\realpos}{\mathbb{R}_{> 0}}
\newcommand{\until}[1]{[#1]}
\newcommand{\map}[3]{#1:#2 \rightarrow #3}
\newcommand{\qedA}{~\hfill \ensuremath{\square}}
\newcommand\scalemath[2]{\scalebox{#1}{\mbox{\ensuremath{\displaystyle #2}}}}
\newcommand{\interior}{\operatorname{int}}

\newcommand{\longthmtitle}[1]{\mbox{}{\textit{(#1):}}}
\newcommand{\setdef}[2]{\{#1 \; | \; #2\}}
\newcommand{\setdefb}[2]{\big\{#1 \; | \; #2\big\}}
\newcommand{\setdefB}[2]{\Big\{#1 \; | \; #2\Big\}}
\newcommand*{\SetSuchThat}[1][]{} 
\newcommand*{\MvertSets}{%
    \renewcommand*\SetSuchThat[1][]{%
        \mathclose{}%
        \nonscript\;##1\vert\penalty\relpenalty\nonscript\;%
        \mathopen{}%
    }%
}
\MvertSets 

\newcommand{\dt}{\mathrm{d}t}
\newcommand{\dy}{\mathrm{d}y}
\newcommand{\dx}{\mathrm{d}x}
\newcommand{\dtau}{\mathrm{d}\tau}
\newcommand{\Cc}{\mathcal{C}}
\newcommand{\Ac}{\mathcal{A}}
\newcommand{\pCc}{\partial \mathcal{C}}
\newcommand{\Bc}{\mathcal{B}}
\newcommand{\Tc}{\mathcal{T}}
\newcommand{\Dc}{\mathcal{D}}
\newcommand{\Oc}{\Omega}
\newcommand{\Occ}{\overline{\Omega}}
\newcommand{\pOc}{\partial \Omega}
\newcommand{\Ocext}{\Oc_\mathrm{ext}}
\newcommand{\Ocint}{\Oc_\mathrm{int}}
\newcommand{\Hc}{\mathcal{H}}
\newcommand{\Kc}{\mathcal{K}}
\newcommand{\Pc}{\mathcal{P}}
\newcommand{\Uc}{\mathcal{U}}
\newcommand{\Sc}{\mathcal{S}}
\newcommand{\Xc}{\mathcal{X}}
\newcommand{\Yc}{\mathcal{Y}}
\newcommand{\Vc}{\mathcal{V}}
\newcommand{\Zc}{\mathcal{Z}}
\newcommand{\Ec}{\mathcal{E}}
\newcommand{\Rm}{\mathcal{\mathbb{R}}}

\newcommand{\divv}{\nabla \cdot \vec{\bv}}
\newcommand{\hs}{h_\mathrm{\Sc}}

\newcommand{\defeq}{\triangleq}

\newcommand{\vr}{\varepsilon}
\newcommand{\nom}{{\operatorname{nom}}}
\newcommand{\m}{{\operatorname{min}}}
\newcommand{\des}{{\operatorname{des}}}
\newcommand{\on}{{\operatorname{on}}}
\newcommand{\off}{{\operatorname{off}}}
\newcommand{\fl}{{\operatorname{FL}}}
\newcommand{\Lie}{\mathcal{L}}
\newcommand{\qp}{{\operatorname{QP}}}

\newcommand{\ie}{i.e., }
\newcommand{\todo}[1]{{\color{cyan} Todo: #1}}

\newcommand{\ba}{\mathbf{a}}
\newcommand{\bb}{\mathbf{b}}
\newcommand{\be}{\mathbf{e}}
\renewcommand{\bf}{\mathbf{f}} 
\newcommand{\bff}{\mathbf{f}}
\newcommand{\bg}{\mathbf{g}}
\newcommand{\bk}{\mathbf{k}}
\newcommand{\bp}{\mathbf{p}}
\newcommand{\bq}{\mathbf{q}}
\newcommand{\bu}{\mathbf{u}}
\newcommand{\bv}{\mathbf{v}}
\newcommand{\bvv}{\vec{\mathbf{v}}}
\newcommand{\bn}{\mathbf{n}}
\newcommand{\hbn}{\hat{\mathbf{n}}}

\newcommand{\bx}{\mathbf{x}}
\newcommand{\bz}{\mathbf{z}}
\newcommand{\br}{\mathbf{r}}
\newcommand{\bA}{\mathbf{A}}
\newcommand{\bB}{\mathbf{B}}
\newcommand{\bD}{\mathbf{D}}
\newcommand{\bC}{\mathbf{C}}
\newcommand{\bF}{\mathbf{F}}
\newcommand{\bJ}{\mathbf{J}}
\newcommand{\bG}{\mathbf{G}}
\newcommand{\bK}{\mathbf{K}}
\newcommand{\bP}{\mathbf{P}}
\newcommand{\bW}{\mathbf{W}}
\newcommand{\bw}{\mathbf{w}}
\newcommand{\bd}{\mathbf{d}}
\newcommand{\bvy}{\vec{\by}}
\newcommand{\bty}{\tilde{\by}}
\newcommand{\bbeta}{\boldsymbol{\eta}}
\newcommand{\mb}[1]{\mathbf{#1}}

\newcommand{\bY}{\mathbf{Y}}
\newcommand{\by}{\mathbf{y}}
\newcommand{\byobs}{\mathbf{y}_\mathrm{obs}}
\newcommand{\bl}{\mathbf{\lambda}}

\newcommand{\bxd}{\bx_\mathrm{d}}
\newcommand{\bxobs}{\bx_\mathrm{obs}}
\newcommand{\md}{\mathrm{d}}

\newcommand{\Uxd}{U_{\mathrm{d}}}
\newcommand{\Uobs}{U_{\mathrm{obs}}}
\newcommand{\Uapf}{U_{\mathrm{APF}}}

\newcommand{\GradUxd}{\nabla U_{\mathrm{d}}}
\newcommand{\GradUobs}{\nabla U_{\mathrm{obs}}}
\newcommand{\GradUapf}{\nabla U_{\mathrm{APF}}}


\begin{abstract}Synthesizing safe sets for robotic systems operating in complex and dynamically changing environments is a challenging problem. Solving this problem can enable the construction of safety filters that guarantee safe control actions---most notably by employing Control Barrier Functions (CBFs). This paper presents an algorithm for generating safe sets from perception data by leveraging elliptic partial differential equations, specifically Poisson's equation. 
Given a local occupancy map, we solve Poisson's equation subject to Dirichlet boundary conditions, with a novel forcing function. Specifically, we design a smooth \textit{guidance} vector field, which encodes gradient information required for safety.
The result is a variational problem for which the unique minimizer---a \textit{safety function}---characterizes the safe set. After establishing our theoretical result, we illustrate how safety functions can be used in CBF-based safety filtering.
The real-time utility of our synthesis method is highlighted through hardware demonstrations on quadruped and humanoid robots navigating dynamically changing obstacle-filled environments.
\end{abstract}

\IEEEpeerreviewmaketitle

\section{Introduction}

Deploying robotic systems in real-world environments autonomously requires that they operate in complex, dynamic environments while avoiding collisions with multiple objects of arbitrary geometry.  Achieving this level of dynamic safety necessitates a quantifiable description of the safety requirement, i.e. a functional representation of the environment via a safety constraint. Additionally, this representation must be integrated with the dynamics of the system to produce safe inputs, i.e., inputs that ensure satisfaction of the safety requirements.  
To achieve this, numerous methods for enforcing dynamic safety constraints have been investigated, including Hamilton-Jacobi reachability \cite{BansalCDC17,TomlinTAC21,KimAJTsafetyfilters}, Model Predictive Control \cite{borrelli2017predictive,WabersichTAC23}, Artificial Potential Fields (APFs) \cite{khatib1986real,singletary2021comparative}, and Control Barrier Functions (CBFs) \cite{AmesTAC17}.  This paper seeks to fuse the synthesis of safety constraints with controllers that yield safe behaviors, coupling functional representations of dynamic environments with the generation of safe controllers. 

Safety constraints are typically synthesized using heuristic approaches for simple environments, or through general methods that are non-constructive. While heuristic methods work for environments with simple geometries \cite{paulnonsmooth, molnar2023composing}, they struggle when generalized to more complex environments.
Alternatively, for more complex geometries, signed distance functions (SDFs) \cite{SingletaryMolnarSafetyCriticalFood,oleynikova2017voxblox,long2021learning} have been proposed as a method for generating safety descriptors in the context of collision avoidance.
However, SDFs possess gradient discontinuities which present challenges in synthesizing safe controllers.
Therefore, the synthesis of safety constraints is a challenging problem in its own right---made even more challenging by the need to couple safety constraints with system dynamics. 

\begin{figure}[t!]
    \centering    \includegraphics[width=1\linewidth]{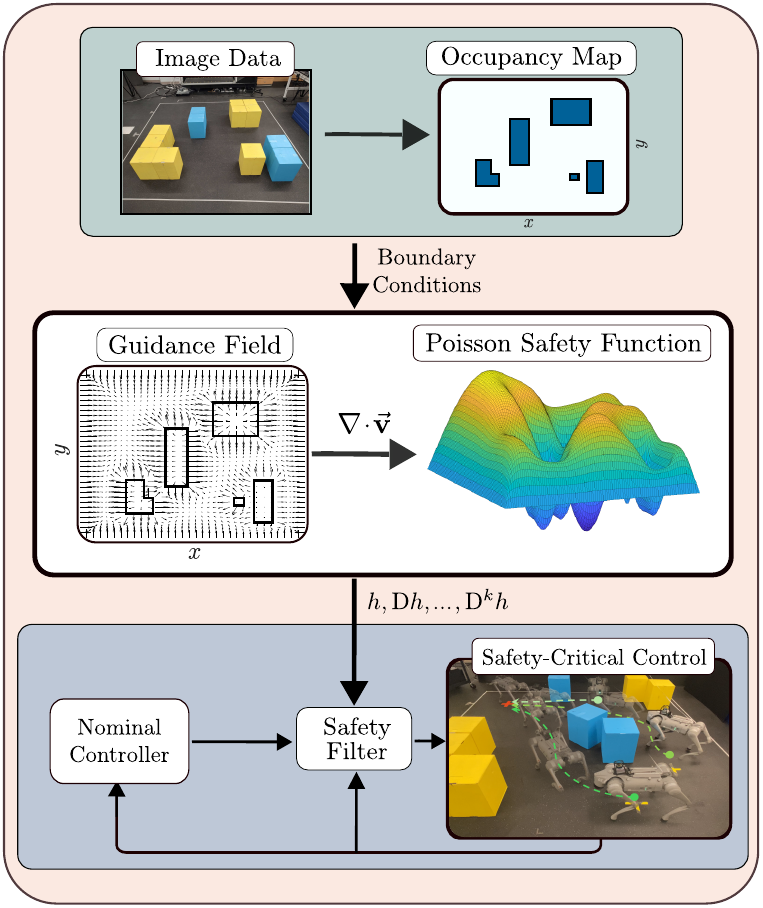}
    \vspace{-5mm}
    \caption{Safe set synthesis from perception data via Poisson's equation. Hardware experimental footage: {\color{blue}\url{https://youtu.be/fBRdkAJGixI}.}}
    \label{fig:hero_figure}
    \vspace{-8mm}
\end{figure}

Given a description of the environment in the context of a goal location subject to obstacles, a popular method for enforcing safe behavior is the APF.  Originally introduced in \cite{khatib1986real}, APFs are a heuristic technique for achieving dynamic safety by encoding ``repulsive" gradients on obstacle surfaces and ``attractive" gradients to a desired goal. This coupling between the dynamics of the system (via gradients) and specifications (via attractive and repulsive potentials) allows for the problem to be framed as one of constructing an APF. 
One approach to constructing APFs is to frame them as solutions to elliptic partial differential equations (PDEs) \cite{gilbarg1977elliptic,jost2012partial}. This has been employed in the context of navigation and path planning \cite{connolly1990path,connolly1993applications,kim1992real} and is useful for complex geometry \cite{Davidfluidpotential}.
Despite the success of APFs, this method assumes single integrator dynamics, i.e., a kinematic model.  Additionally, APFs couple goal-reaching (stability) and collision avoidance (safety), limiting their ability to provide a description of safety that is independent of control objectives.   

A decoupled approach to synthesizing safe controllers for general nonlinear systems is through CBFs \cite{ADA-JWG-PT:14,AmesTAC17}.  In this framework, safety constraints yield a safe set defined as the superlevel set of a function, and this function is a CBF if there exists inputs that render the safe set forward invariant. CBFs have been shown to generalize APFs \cite{singletary2021comparative}, in that they decouple safety from stability, and apply to general nonlinear systems. Specifically, given a CBF, one can synthesize a safety filter \cite{gurriet2018towards,AmesECC19} framed as a quadratic program with the stability objectives as a cost and safety objective as a constraint.  
Methods such as \cite{TamasRAL22, cohen2024constructive,bahati2025control} offer a systematic approach for constructing CBF-based safety filters for robotic systems via reduced-order models with real-time realizations on hardware. 
However, for collision avoidance in complex environments, the constructive design of functions encoding safety objectives, i.e., the synthesis of safe sets and the corresponding CBFs, remains a challenging task. 

In this paper, we show that elliptic PDEs can be used to synthesize \emph{safety functions}---functional representations of complex environments that characterize safety. We demonstrate that the process of constructing safe sets can be formulated as solving a Dirichlet problem for Poisson's equation. 
In particular, given an occupancy map, we solve Poisson's equation subject to Dirichlet boundary conditions, where the solution, a safety function, characterizes a safe set, while the boundary conditions encode desired level-set values on obstacle surfaces.
We propose several methods for constructing the forcing function within Poisson's equation, including an average flux method and a guidance field method \cite{perez2023poisson} that provides additional degrees of freedom for defining safety.  The solution has desirable differentiability (\ie regularity) properties, which are critical in enabling their use in the synthesis of safety filters. 

The key observation of this paper is that safety functions obtained from Poisson's equation can be used to synthesize CBFs and, therefore, safety filters.  The main contributions are threefold: (1) we present a constructive way of generating safe sets for complex environments from perception data via Poisson's equation, (2) we illustrate and prove how the resulting safety functions can be used to synthesize CBF-based safety filters, and (3) we demonstrate the real-time efficiency of our approach with hardware experiments on quadruped and humanoid robots in static and dynamically changing environments. 
Importantly, we show that safety function synthesis (1) can be done in real-time via perception data to enable safe behaviors (2) on highly dynamic robotic systems (3). 
\noindent
\textit{\textbf{Notation:}}
\begin{itemize}
    \item A continuous function $\gamma: \mathbb{R} \rightarrow \mathbb{R}$ is an \textit{extended class} $\mathcal{K}$, $\Kc^{e}_{\infty}$, $(\gamma \in \Kc^{e}_{\infty})$ if $\gamma$ is monotonically increasing, $\gamma(0) = 0$, $\lim_{s \rightarrow \infty} \gamma(s) = \infty$, and $\lim_{s \rightarrow -\infty} \gamma(s) = -\infty$.
    \item For a function $w:\re^n \rightarrow \re$ where $\bx \mapsto w(\bx) = w(x_1,\cdots,x_n)$, let $Dw$ denote the gradient and $D^2w$ denote the Hessian. More generally, for $k \in \mathbb{N}_0$, $D^kw$ is the collection of all partial derivatives of order $k$. Given a multi-index $\xi = (\xi_1, \cdots,\xi_n) \in \mathbb{N}_0^n$ with $|\xi| = k$:
    \[
    D^k w \coloneqq \big \{D^{\xi}w \, \big | \,  |\xi| = k \big \}, \quad  D^\xi w = \frac{\partial^{|\xi|} w}{\partial x_1^{\xi_1} \cdots \partial x_n^{\xi_n}}. 
    \]
    \item $\Oc$ is an open, bounded and connected set with smooth boundary $\pOc$ such that $\overline{\Oc}  = \Oc \cup \pOc$ is the closure of $\Oc$.
    \item $C^k(\Oc)$ is the set of functions: 
    $ \{w:\Oc\rightarrow \re \, \big | \,$ $w$ is $k$-times continuously differentiable$\}.$  
    \item $C^k(\Occ) 
     = \{w \in  C^k(\Oc) \, | \,$ $D^{\xi}w$ is uniformly continuous on bounded subsets of $\Oc$ for all $|\xi| \leq k$\}, that is,  $D^{\xi}w$ continuously extends to $\Occ$. 
       \item $C^k(\Oc;\re_{\geq 0})$ =
     $ \{w:\Oc\rightarrow \re_{\geq 0} \, \big | \,$ $w$ is $k$-times continuously differentiable$\}$ with the similar respective definitions for $C^k(\Oc;\re_{> 0}), C^k(\Oc;\re_{\leq 0}), C^k(\Oc;\re_{< 0})$ and vector or matrix-valued functions $C^k(\Oc;\re^n),C^k(\Oc;\re^{n \times n}).$
    \item $C^{k,\alpha}(\Oc)$ for $0< \alpha < 1$ denotes Hölder continuous function spaces: see Appendix~\ref{appdx: poisson's equation}.
\end{itemize}

\section{Background:  
Safety-Critical Control}
This section reviews safety for nonlinear systems in the context of CBFs.
We consider a nonlinear control-affine system:
    \begin{align}\label{eq:nl system}
        \dot{\bx} = \bf(\bx) + \bg(\bx)\bu,
    \end{align}
    where $\bx \in \re^n$ is the state and $\bu \in \re^m$ is the control input. The function $\bf:\re^n\rightarrow \re^{n}$ denotes the drift dynamics and $\bg:\re^n\rightarrow \re^{n\times m}$ is the actuation matrix, both assumed to be locally Lipschitz continuous.
    A locally Lipschitz continuous controller $\bk: \re^n \rightarrow \re^m$ yields the closed-loop system:
     \begin{align}\label{eq:ode closed loop}
        \dot{\bx} = \bf_{\mathrm{cl}}(\bx) = \bf(\bx) + \bg(\bx)\bk(\bx).
    \end{align}
    Because the functions $\bf, \bg$, and $\bk$ are locally Lipschitz continuous, \eqref{eq:ode closed loop} defines an ordinary differential equation such that for any initial condition $\bx(0) = \bx_0 \in \re^n$, there exists a unique continuously differentiable solution $t \mapsto \bx(t)$ on the maximal time interval $I_\mathrm{max}(\bx_0) = [0, t_{\max} (\bx_0))$ \cite{perko2013differential}.

\subsection{Control Barrier Functions and Safety Filters}

The concept of safety can be formalized by requiring that all system trajectories $t \mapsto \bx(t)$ remain within a set $\Sc \subset \re^n$, characterized by the notion of forward invariance.
\begin{definition}(Forward Invariance)  A set $\Sc$ is \textit{forward invariant} with respect to \eqref{eq:ode closed loop} if for every initial condition $\bx_0 \in \Sc$, the resulting trajectory $t \mapsto \bx(t) \in \Sc$ for all $t \in I_\mathrm{max}(\bx_0)$. 
\end{definition}
%
We consider a system \textit{safe} with respect to a user-defined \textit{safe set} $\Sc$ if $\Sc$ is forward invariant. In particular, we consider safe sets defined as the 0-superlevel set of a continuously differentiable function $h_\mathrm{\Sc}:\re^n \rightarrow \re$:
\begin{align}\label{eq: safe set og}
\Sc = 
\big \{ 
\bx \in \re^n \,\big|\, \hs(\bx) \geq 0
\big \}.
\end{align}
Forward invariance of such sets can be guaranteed by Nagumo's theorem \cite{blanchini1999set}, which requires the derivative of $\hs$ along the trajectories of the closed-loop system \eqref{eq:ode closed loop} to be non-negative on the boundary of $\Sc$, that is:
\begin{align}\label{eq: Nagumo hdot}
    \dot{h}_\Sc(\bx) = \underbrace{D\hs(\bx)\cdot \bf(\bx)}_{L_\bf \hs(\bx)} +   \underbrace{D\hs(\bx)\cdot\bg(\bx)}_{L_\bg \hs(\bx)} \bk(\bx) \geq 0,
\end{align}
for all $\bx \in \partial \Sc$. This ensures that on the set boundary $\partial \Sc$, the vector field of \eqref{eq:ode closed loop} points towards the interior of $\Sc$ or is tangent to $\partial \Sc$. The condition $D\hs(\bx) \neq \mathbf{0}$ when $\hs(\bx) = 0$ is necessary to ensure the existence of a locally Lipschitz continuous controller $\bk$ for the system \eqref{eq:ode closed loop} that satisfies \eqref{eq: Nagumo hdot}. 
CBFs are a tool for synthesizing controllers and safety filters that enforce the safety of the system \eqref{eq:nl system} on $\Sc$.

\begin{definition}(Control Barrier Function (CBF) \cite{AmesTAC17}) Let $\Sc \subset \re^n$ be the 0-superlevel set of a  continuously differentiable function $\hs:\re^n \rightarrow \re$ satisfying $D\hs(\bx) \neq 0$ when $\hs(\bx) = 0$. The function $\hs$ is a Control Barrier Function (CBF) for \eqref{eq:nl system} on $\Sc$ if there exists $\gamma \in \Kc_{\infty}^e$ such that for all $\bx \in \re^n$:
    \begin{align}\label{eq: CBF main def}
        \! \! \! \! \sup_{\bu \in \re^m}   \!\left \{ {\dot{h}_\Sc(\bx, \bu) \!= \!L_\bf h_\Sc(\bx)} +   {L_\bg h_\Sc(\bx)} \bu > -\gamma(h_\Sc(\bx)) \!\right 
 \} \!.
\! \!   \! \end{align}
\end{definition}
Given a CBF $h_\Sc$ and $\gamma$, the set of feasible point-wise control values satisfying \eqref{eq: CBF main def} is given by:
\begin{align}
    \Kc_\mathrm{CBF}(\bx) = \left \{\bu \in \re^m \, \big | \,  \dot{h}_\Sc(\bx, \bu)  \geq -\gamma(h_\Sc(\bx))  \right \}
\end{align}
such that any locally Lipschitz controller $\bx \mapsto \bk(\bx) \in \Kc_\mathrm{CBF}(\bx)$ enforces the forward invaraince of $\Sc$ \cite{AmesTAC17}, establishing the safety of \eqref{eq:nl system} on $\Sc$.
Given a desired (potentially unsafe) nominal controller $\bk_{\mathrm{nom}}:\re^n \rightarrow \re^m$, the following optimization-based controller \textit{filters} $\bk_{\mathrm{nom}}$ by minimally adjusting it to the nearest safe action:
\begin{align*}\label{eq: safety filter}
    \bk(\bx) = &\argmin_{\bu \in \re^m} &&\|\bu - \bk_{\mathrm{nom}}(\bx)\|_2^2 \tag{Safety-Filter} \\
    & \quad 
 \mathrm{s.t.} &&L_\bf \hs(\bx) + L_\bg \hs(\bx)\bu  \geq - \gamma(\hs(\bx)).
\end{align*}
The next subsection discusses systems with properties that enable a systematic approach to constructing CBFs.
\subsection{Outputs and Relative Degree}
In this paper, we focus on systems for which safety specifications are expressed using a set of desired \textit{outputs}. To facilitate the construction of CBFs, we recall the notion of \textit{relative degree}, which represents the layer of differentiation at which the control inputs affects the outputs.
\begin{definition}[Relative Degree $r$ \cite{isidori1985nonlinear}]\label{def:relative-degree}
    A function $\by\,:\,\R^n\rightarrow\R^p$ has \emph{relative degree} $r\in\mathbb{N}$ for \eqref{eq:nl system} if:
    \begin{align}
        L_{\bg}L_{\bf}^{i}\by(\bx) \equiv \mathbf{0}, &\quad \forall i\in\{0,\dots,r-2\},\label{eq:zero-lie-derivatives} \\
        \mathrm{rank}(L_{\bg}L_{\bf}^{r-1}\by(\bx)) = p, & \quad \forall \bx\in\R^n.\label{eq:decoupling-full-rank}
    \end{align}
\end{definition}
Given an output $\by$ with relative degree $r$, we define a new set of partial coordinates:
\begin{equation}\label{eq:output-coordinates}
      \vec{\mb{y}}(\bx) \coloneqq \begin{bmatrix}
        \by(\bx) \\
        \by^{(1)}(\bx) \\
        \vdots \\
        \by^{(r-1)}(\bx)
    \end{bmatrix}
    =
    \begin{bmatrix}
        \by(\bx) \\
        L_{\bf}\by(\bx) \\
        \vdots \\
        L_{\bf}^{r-1}\by(\bx)
    \end{bmatrix}
    \in\re^{pr},
\end{equation}
where $\by^{(r)} = \dv[r]{\by}{t}$, leading to the following linear dynamics:
\begin{align}\label{eq: linear output dynamcis1}
    \frac{\mathrm{d}}{\dt}\vec{\mb{y}}(\bx) & = 
    \underbrace{
    \begin{bmatrix}
        \mb{0} & \mb{I}_{p(r-1)}\\
        \mb{0} & \mb{0}
    \end{bmatrix}}_{\mb{A}} \vec{\mb{y}}(\bx)
    \! + \!
    \underbrace{
    \begin{bmatrix}
        \mb{0} \\
        \mb{I}_p
    \end{bmatrix}}_{\mb{B}}
    \bw\\
    \bw &\coloneqq L_{\bf}^{r}\by(\bx) + L_{\bg}L_{\bf}^{r-1}\by(\bx)\bu,     \label{eq: y_r mapping}
\end{align}
where \eqref{eq: y_r mapping} is an input to \eqref{eq: linear output dynamcis1}. When $\by$ has relative degree $r$, the controller $\bw= \hat{\bk}(\vec{\by})$ designed for \eqref{eq: linear output dynamcis1} can be transferred back to the controller for \eqref{eq:nl system} as follows\footnote{Condition \eqref{eq:decoupling-full-rank} implies the right psuedo-inverse $L_{\bg}L_{\bf}^{r-1}\by(\bx)^\dagger$ exists.}:
\begin{equation}\label{eq:input-transformation}
    \bu = L_{\bg}L_{\bf}^{r-1}\by(\bx)^\dagger\left[\hat{\bk}(\vec{\by}(\bx)) - L_{\bf}^{r}\by(\bx) \right].
\end{equation}
This partial coordinate transformation is a full coordinate transformation if $pr=n$. The output dynamics \eqref{eq: linear output dynamcis1} are a chain of integrators, allowing for the use of techniques such as \cite{cohen2024constructive,bahati2025control} and \cite{AndrewCDC22,MurrayACC20} to construct CBFs, including for classes of systems with outputs of nonuniform relative degree.

Given an output $\by$, the success of the techniques in \cite{AndrewCDC22,MurrayACC20, cohen2024constructive,bahati2025control} rely on knowledge of the function $\by \mapsto h(\by)$ encoding a safety specification for a desired safe set $\Cc$. 
For simple environments, one can often leverage analytical expressions to characterize the safe set with a smooth function $h$; however, for more complex environments, such as those with multiple obstacles of aribtrary geometry, obtaining a single smooth $h$ is challenging without introducing excess conservatism \cite{molnar2023composing}. In what follows, we demonstrate how Poisson’s equation can be leveraged to overcome these challenges and generate a single smooth function $h$ for environments with arbitrary geometries.

\section{Safe Set Synthesis via Poisson's Equation}
In this work, we consider systems with spatial safety specifications, with outputs given by:
\begin{align}\label{eq: spatial coordinates}
    \by \coloneqq \by(\bx) = (x, y, z) \in \re^3.
\end{align}
Our primary focus is the construction of safe sets purely in spatial coordinates \eqref{eq: spatial coordinates}; therefore, our construction is geometric and independent of system dynamics \eqref{eq:nl system}. 
Given occupancy data in the coordinates \eqref{eq: spatial coordinates}, we consider an open domain $\Oc \subset \re^3$ representing unoccupied regions, with its boundary $\partial \Oc$ corresponding to obstacle surfaces. 
Our goal is to design a safe set defined as the $0$-superlevel set of a \textit{safety function}. 
\begin{definition}(Safety Function)\label{def: safety function} Let $\by = (x,y,z) \in \re^3$ represent three-dimensional spatial coordinates. We call a function $h: \Occ \subset \re^3 \rightarrow \re$ a \textit{safety function} of order $k$ on $\Occ$ if $h$ is $k$-times differentiable, $Dh(\by) \neq 0$ when $h(\by) = 0$ and the 0-superlevel set of $h$ defines a safe-set $\Cc$ satisfying:
\begin{subequations}\label{eq: safe set characterizations}
\begin{align}
\Cc = 
\big \{ 
\by \in \Occ \, | \,h(\by) \geq 0
\big \}, \\
\partial \Cc = \big \{ 
\by \in \Occ  \, \big| \, h(\by) = 0
\big \},   \\
\mathrm{int}(\Cc) = \big \{ 
\by \in \Occ  \, \big | \, h(\by) > 0
\big \}.
\end{align}
\end{subequations}
\end{definition}
Specifically, let $\Omega \subset \re^3$ be an open, bounded and  connected set with piecewise smooth boundary $\partial \Oc$. That is,  we assume $\partial \Oc$ to be a finite union of closed, smooth obstacle surfaces:
\begin{align}\label{eq: piecewise boundary}
    \partial \Oc = \bigcup_{i=1}^{n_\mathrm{obs}} \partial \Gamma_i, 
\end{align}
where each $\Gamma_i$ is an open, bounded and connected set defining an obstacle interior, and $\partial \Gamma_i$ is its smooth boundary.  The term $n_\mathrm{obs}$ denotes the total number of obstacles, i.e., isolated occupied regions, in the environment.

We propose a method for synthesizing safe sets from environmental boundary data by solving a boundary value problem for Poisson's equation, a second-order linear elliptic partial differential equation (PDE). 
In particular, we consider safe sets characterized by safety functions which satisfy Poisson's equation subject to Dirichlet boundary conditions:
\begin{gather}\label{eq: poisson's eq}
\left \{
    \begin{aligned}
        \Delta h(\by) &= f(\by)& \text{ in } \Omega,\\
        h(\by) &= 0 &  \text{ on } \partial \Omega, \\
    \end{aligned}
    \right.
\end{gather}
where \( \Delta = \frac{\partial^2 }{\partial x^2} + \frac{\partial^2 }{\partial y^2} + \frac{\partial^2 }{\partial z^2}\) is the \textit{Laplacian} and $f: \Oc \rightarrow \re$ is a given \textit{forcing} function. From classical elliptic regularity stated in Theorem~\ref{thm:regularity} in  Appendix~\ref{appdx: poisson's equation}, a sufficient condition for \eqref{eq: poisson's eq} to admit a twice continuously differentiable solution, $h \in C^2(\Occ)$, is that $f$ is Hölder continuous, that is, $f \in C^{k, \alpha}(\Occ)$ for some $k \in \mathbb{N}_0$ and $0 < \alpha < 1$. Furthermore, a smooth forcing function, $f \in C^\infty(\Occ)$, implies a smooth solution, $h \in C^\infty(\Occ)$, to \eqref{eq: poisson's eq}. To assist the reader, we review key concepts from elliptic PDEs, including Hölder continuity, in Appendix~\ref{appdx: poisson's equation}.

From the weak minimum principle in Theorem~\ref{thm: weak maximum}, a function $h \in C^2(\overline \Omega)$ attains its minimum on $\partial \Omega$ if it is superharmonic, \ie  $\Delta h(\by) \leq 0$, in the interior $\Oc$. To ensure $h$ is not a constant function, a sufficient condition is that the strict inequality $\Delta h(\by) < 0$ holds in $\Oc$. Then, from Hopf's lemma, (Lemma~\ref{lemma: Hopf's}), the outward directional derivatives on the boundary satisfy: 
\begin{align}\label{eq: Hopf Lemmainequality}
   D h(\by) \cdot \hat \bn(\by) < 0 \text{ on } \partial \Oc.
\end{align}
A function $h$ satisfying $\Delta h(\by) < 0$ in $\Oc$ can be obtained by solving \eqref{eq: poisson's eq} with a negative forcing function, \ie $f(\by) < 0$ for all $\by \in \Oc$. In particular, if $f: \Oc \rightarrow \re_{<0}$ is Hölder continuous on $\Oc$, we obtain the following theorem.

\begin{theorem}(Poisson Safety Function)\label{thm: main thm Safety Value Function}
    Let $\Oc$ be an open, bounded and connected set with piecewise smooth boundary $\pOc$. Suppose $f \in C^{k,\alpha}(\Oc;\re_{< 0})$ for some $k \in \mathbb{N}_0$ and $ \alpha \in (0,1)$. 
    Then the solution $h:\Occ \rightarrow \re$ to the Dirichlet problem for Poisson's equation \eqref{eq: poisson's eq} is a safety function of order $2+k$.
\end{theorem}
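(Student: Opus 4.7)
The plan is to verify, in order, the three requirements in Definition~\ref{def: safety function}: (i) the regularity of $h$, (ii) non-vanishing of $Dh$ on the zero-level set, and (iii) the set-theoretic characterizations \eqref{eq: safe set characterizations}. Each step maps cleanly onto one of the classical elliptic PDE tools already staged in the excerpt: interior/boundary Schauder regularity, the weak/strong minimum principle, and Hopf's lemma.

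\textbf{Step 1 (Regularity).} I would invoke the elliptic regularity result (Theorem~\ref{thm:regularity}) cited from the appendix. Since $f \in C^{k,\alpha}(\Omega;\re_{<0})$ with piecewise smooth boundary and Dirichlet data $h\equiv 0 \in C^\infty(\pOc)$, Schauder estimates give the unique solution $h \in C^{2+k,\alpha}(\Occ) \subset C^{2+k}(\Occ)$. This establishes the differentiability required for order $2+k$.

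\textbf{Step 2 (Sign of $h$ in $\Oc$).} Because $f(\by)<0$ on $\Oc$, we have $\Delta h(\by) = f(\by) < 0$, so $h$ is strictly superharmonic. The weak minimum principle (Theorem~\ref{thm: weak maximum}) then forces $\min_{\Occ} h = \min_{\pOc} h = 0$, so $h \geq 0$ throughout $\Occ$. To upgrade this to the strict inequality $h(\by) > 0$ in $\Oc$, I would use the strong minimum principle: if $h$ attained the value $0$ at some interior point, it would be constant on $\Oc$, hence $\Delta h \equiv 0$, contradicting $\Delta h = f < 0$. Therefore $h > 0$ in $\Oc$ and $h = 0$ exactly on $\pOc$.

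\textbf{Step 3 (Non-vanishing gradient on $\{h=0\}$).} The zero-set of $h$ is precisely $\pOc$ by Step~2. At every $\by \in \pOc$, Hopf's lemma (Lemma~\ref{lemma: Hopf's}) gives $Dh(\by) \cdot \hat{\bn}(\by) < 0$ (this is the inequality already recorded as \eqref{eq: Hopf Lemmainequality}), which immediately yields $Dh(\by) \neq \mathbf{0}$ on $\pOc$. This verifies the transversality requirement in Definition~\ref{def: safety function}.

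\textbf{Step 4 (Set characterizations).} With $h>0$ on $\Oc$ and $h=0$ on $\pOc$, the three identities in \eqref{eq: safe set characterizations} reduce to the tautologies $\Cc = \Occ$, $\partial \Cc = \pOc = \{h=0\}$, and $\mathrm{int}(\Cc) = \Oc = \{h>0\}$, which follow from Steps~1--2 together with the topological fact that $\pOc$ is the boundary of $\Occ$ in $\re^3$. Combining all four steps gives the claim.

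The main obstacle I anticipate is Step~2: the weak minimum principle alone only gives $h \geq 0$, and one must rule out the degenerate case in which $h$ vanishes at interior points, since without it Step~3 cannot conclude that the zero-set is confined to $\pOc$. The strong minimum principle is what closes this gap, and its hypothesis is met cleanly because the strict sign condition $f<0$ (rather than merely $f\leq 0$) is built into the assumption $f \in C^{k,\alpha}(\Oc;\re_{<0})$. A secondary subtlety is ensuring Hopf's lemma applies at every boundary point, which requires the interior sphere condition; this is guaranteed by the piecewise smoothness of $\pOc$ assumed in \eqref{eq: piecewise boundary}.
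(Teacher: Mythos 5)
Your proposal is correct and follows essentially the same route as the paper's proof: classical elliptic regularity (Theorem~\ref{thm:regularity}) for the order of differentiability, the minimum principle (Theorem~\ref{thm: weak maximum}) for nonnegativity, and Hopf's lemma via the interior sphere condition for $Dh \neq \mathbf{0}$ on the zero-level set. Your Step~2 is in fact slightly more careful than the paper's one-line argument, since you explicitly invoke the strong minimum principle to rule out interior zeros of $h$ and thereby justify $\mathrm{int}(\Cc) = \Oc = \{h > 0\}$, a point the paper leaves implicit.
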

\begin{proof}
From the minimum principle and classical elliptic regularity presented in Theorem \ref{thm: weak maximum} and \ref{thm:regularity} respectively, we have that $f \in C^{k,\alpha}(\Oc;\re_{< 0})$ implies that $h \in C^{2+k,\alpha}(\overline{\Omega};\re_{\geq 0})$, which defines a compact 0-superlevel set satisfying \eqref{eq: safe set characterizations} with $\Cc = \overline{\Oc},
\partial \Cc = \partial \Oc$ and 
$\mathrm{int}(\Cc) = \Omega
$. Given that the boundary is piecewise smooth as in \eqref{eq: piecewise boundary}, the interior sphere property \ie Def. \ref{def: interior sphere property} in Appendix~\ref{appdx: poisson's equation} holds. Thus from Hopf's lemma, we have that \eqref{eq: Hopf Lemmainequality} holds, implying $D h(\by) \neq 0$ when $h(\by)=0$. Thus, $h$ is a safety function of order $2 +k$ as in Def.~\ref{def: safety function}. 
\end{proof}

The above theorem provides conditions to ensure a solution $h$ to \eqref{eq: poisson's eq} characterizes safe regions.
Unsafe regions are typically defined by regions where $h(\by) < 0$. 
By letting $\Gamma_i$ (\ie occupied region corresponding to the interior of each obstacle) define an unsafe region, 
one can verify from the weak maximum principle in Theorem~\ref{thm: weak maximum}, that solving Poisson's equation \eqref{eq: poisson's eq} on $\overline{\Gamma}_i$ with $f \in C^{k,\alpha}(\Gamma_i;\re_{> 0})$ ensures the solution $h$ is subharmonic in each $\Gamma_i$, resulting in $h(\by) < 0$ for all $\by \in \Gamma_i$. Combining this with Theorem~\ref{thm: main thm Safety Value Function}, the safety function characterizes both safe and unsafe regions.

\begin{remark}(Smooth Boundary)
By assuming a smooth boundary $\pOc$ in \eqref{eq: piecewise boundary}, where each $\partial \Gamma_i$ is smooth for all $i$, any sharp corners on obstacles are assumed to be smoothed out. 
This ensures that the regularity properties (see Appendix~\ref{appdx: poisson's equation}) in the interior of the domains $\Gamma_i$ and $\Omega$ can be extended to the boundaries. 
 Our results still hold for non-smooth boundaries, provided that they satisfy the interior sphere property in Def.~\ref{def: interior sphere property}
with the appropriate (possibly non-classical) definition of the boundary derivative \cite{rosales2019generalizing, apushkinskaya2022around,grisvard2011elliptic, borsuk2006elliptic,krylov1996lectures}.
We present Theorem \ref{thm: main thm Safety Value Function} for smooth boundaries to avoid technical details beyond the scope of this paper, and (as will be discussed later) due to the observed benefits in the performance of synthesized safety-filters near smooth corners.
\end{remark}

In summary, the differentiability (\ie regularity) of $f$ in \eqref{eq: poisson's eq} ensures that the solution $h$ to \eqref{eq: poisson's eq} inherits the desired regularity properties which provide the necessary differentiability guarantees for $h$. Furthermore, the negativity of $f$ in $\Oc$ guarantees that $h$ characterizes the safe set \eqref{eq: safe set characterizations} and satisfies \eqref{eq: Hopf Lemmainequality}, by enforcing the superharmoniticity of $h$ in $\Oc$. Thus, $h$ is a safety function on $\Occ$ with guarantees of differentiability. The next section provides methods of constructing forcing functions that satisfy these conditions, facilitating the practical implementation of the resulting safety functions.

\section{Forcing function Construction}\label{sec: forcing func construction}
In this section, we present methods of designing forcing functions that ensure the solution to the boundary value problem for Poisson’s equation \eqref{eq: poisson's eq} is a safety function.

\begin{figure*}
    \centering    \includegraphics[width=1\linewidth]{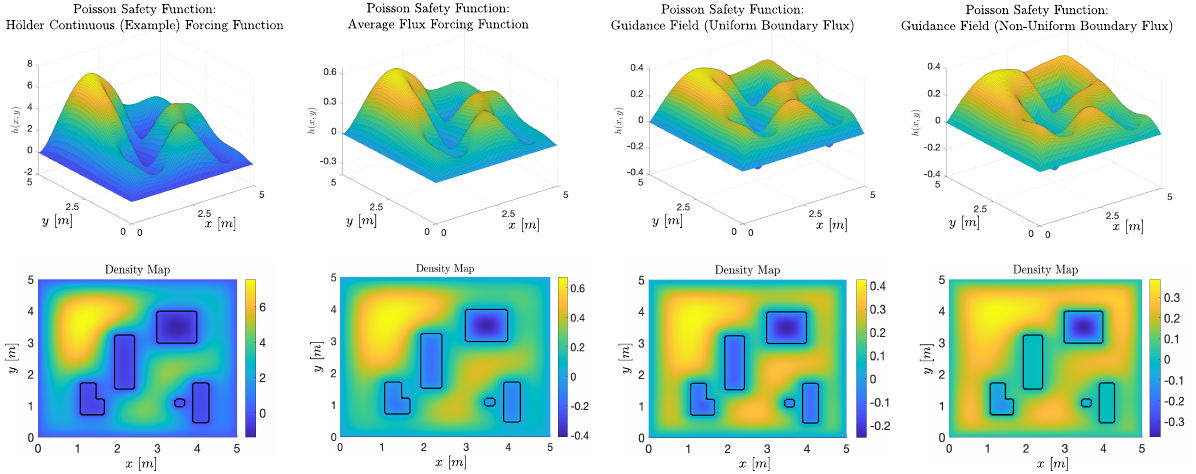}
    \vspace{-6mm}
    \caption{\textbf{[From left to right]} Solutions to Poisson’s equation \eqref{eq: poisson's eq} with the following forcing functions: \textbf{[left]} a Hölder continuous function \eqref{eq: holder continous f} with $\alpha = 0.1$; \textbf{[mid-left]} an average flux forcing function with $\bar b = -1$ in \eqref{eq: forcing function average flux}; \textbf{[mid-right]} a smooth forcing function \eqref{eq: softplus forcing function} constructed using the guidance field \eqref{eq: guidance field} under a uniform boundary flux $b(\by) = -1$ for all $\by \in \pOc$; and \textbf{[right]} the same forcing function with a non-uniform boundary flux $b: \pOc \to \mathbb{R}_{<0}$, allowing different flux values across regions of the boundary, corresponding to different obstacles.}
    \label{fig: forcing funcs}
    \vspace{-4mm}
\end{figure*} 

\subsection{Direct Assignment}
\subsubsection{\textbf{Distance Metric}} An approach to constructing a Hölder continuous forcing function $f \in C^{0,\alpha}(\Oc;\re_{<0})$ for $\alpha \in (0,1)$ is based on the distance to obstacles:
\begin{align}
    \mathrm{dist}(\by, \partial \Oc) = \min_{\byobs \in \partial \Oc} \|\by - \byobs\|,
\end{align}
\begin{align}\label{eq: holder continous f}
    f(\by) = -\left(\frac{\mathrm{dist}(\by, \partial \Oc)}{\|\mathrm{dist}(\by, \partial \Oc)\|_{\infty}}\right)^\alpha, 
\end{align}
for all $\by \in \Occ$. One can verify that \eqref{eq: holder continous f} is Hölder continuous on $\Oc$ by leveraging  \cite[Proposition 1.1.2]{fiorenza2017holder}.
%
Following from Theorem~\ref{thm: main thm Safety Value Function}, the forcing function \eqref{eq: holder continous f} yields a safety function $h \in C^{2, \alpha}(\overline{\Oc}; \re_{\geq 0})$ that lacks orders of differentiability higher than 2. This limitation makes this choice of $f$ unsuitable for control design for systems with outputs of relative degree $r > 1$ as defined in Def.~\ref{def:relative-degree}.
\\
\subsubsection{\textbf{Constant Value}} 
An alternative, straightforward choice of a forcing function is a constant negative value $f \in \re_{<0}$, which guarantees that $h\in C^\infty(\overline{\Omega}; \re_{\geq 0})$.
However, this approach indirectly assigns arbitrary flux values (encoding the magnitude of ``repulsive" gradients) on $\partial \Oc$ without precise knowledge of the resulting flux magnitude.  The notion of flux is detailed in Appendix~\ref{appdx: poisson's equation}.
While this approach yields the desired safe set properties \eqref{eq: safe set characterizations}, it may be beneficial to relate the flux on $\pOc$ to geometric properties of the domain. 
One way to achieve this is by designing a forcing function that enforces a desired \textit{average} flux on $\pOc$. To do this, we leverage Gauss's divergence theorem presented in Appendix~\ref{appdx: poisson's equation}, which dictates that all forcing functions must satisfy\footnote{Note $\pOc$ is the $0$-level set, so $Dh$ is normal (\ie perpendicular) to $\pOc$.}:
\begin{align}
\iiint_\Omega f(\by) \,  \mathrm{d} V 
&= \iiint_\Omega \Delta h(\by) \, \mathrm{d} V \\
&= \oiint_{\partial \Omega} D h(\by) \cdot \hat{\mathbf{n}}(\by) \, \mathrm{d} A,
\end{align} 
where $\mathrm{d} V \coloneq \dx\dy \mathrm{d}z$ denotes the volume element of $\Oc$ and $\mathrm{d} A$ is the surface element of $\partial \Oc$. Let $\Bar{b} \in \re_{<0}$ denote the desired average flux on $\partial \Oc$, defined as:
\begin{equation}
\Bar{b} \coloneqq \frac{
1
}{ \mathrm{Area}(\partial \Oc) }\oiint_{\partial \Omega} D h(\by) \cdot \hat{\mathbf{n}}(\by) \, \mathrm{d} A,
\end{equation}
where $\mathrm{Area}(\partial \Oc)$ denotes the surface area of $\partial \Oc$, representing the surface area of all obstacles. Assuming $f \in \re$, we define:
\begin{equation}\label{eq: forcing function average flux}
f = \frac{
\oiint_{\partial \Omega} D h(\by) \cdot \hat{\mathbf{n}}(\by) \, \mathrm{d} A
}{\iiint_\Omega  \mathrm{dV} } = \Bar{b} \ \frac{\mathrm{Area}(\partial \Oc)}{\mathrm{Vol}(\Oc)},
\end{equation}
where ${\mathrm{Vol}(\Oc)}$ denotes the volume of the domain $\Omega$. It then follows that the forcing function \eqref{eq: forcing function average flux} is a constant negative value that yields a smooth solution, $h \in C^{\infty}(\Occ; \re_{\geq 0})$, to \eqref{eq: poisson's eq} with the average flux $\bar{b}$ on $\pOc$. 
Fig~\ref{fig: forcing funcs} shows the resulting Poisson safety functions leveraging the above methods.

\begin{figure}[t] 
    \centering    \includegraphics[width=1.\linewidth]{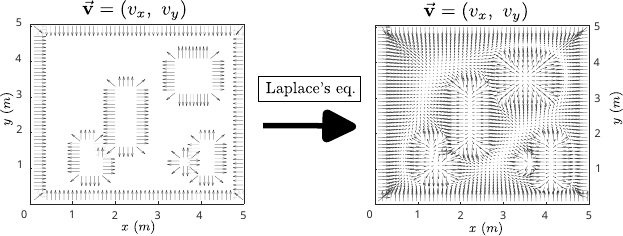}
    \vspace{-6mm}
    \caption{Smooth guidance field generation via Laplace's equation \eqref{eq: guidance field}. \textbf{[left]} Boundary conditions $\vec{\bv} = b \hat{\bn}$ encoding the desired negative flux on obstacle surfaces; and \textbf{[right]} solution to Laplace's equation for each component in $\vec{\bv} = (v_x, v_y)$.} 
    \vspace{-5.5mm}
    \label{fig: guidance field}
\end{figure}

\subsection{Indirect Assignment -- Variational Approach}
Another approach of constructing a forcing function is by designing a \textit{guidance} vector field $\vec \bv:\Occ \rightarrow \re^3$, which encodes the desired flux on $\partial \Oc$, and seeking a function $h$ whose gradient approximates $\vec \bv$ in $\Oc$. Specifically, let $h$ be the minimizer of the cost functional:
\begin{gather}\label{eq: variation cost}
    I[h] =  \iiint_{\Oc} \ \frac{1}{2}|D h(\by)  - \vec{\bv}(\by)|^2  \ \mathrm{d}V,
\end{gather}
among all functions  $h: \Occ \rightarrow \re$ with prescribed boundary conditions  $h(\by) = 0 \ \text{ on } \partial \Oc$.
This infinite-dimensional minimization problem provides a solution \( h \) whose gradient is the least-squares approximation to the guidance field \( \vec{\bv} \). 
A twice differentiable minimizer of \eqref{eq: variation cost}, $h\in C^2(\overline{\Oc})$,  satisfies the associated Euler-Lagrange equation, given by:
\begin{gather}\label{eq: poisson's eq div}
\left \{
    \begin{aligned}
        \Delta h(\by) &= \nabla \cdot \vec{\bv}(\by) & \text{ in } \Omega,\\
        h(\by) &= 0 &  \text{ on } \partial \Omega, \\
    \end{aligned}
    \right.
\end{gather}
where $f(\by)=\nabla \cdot \vec{\bv}(\by)$ denotes the divergence of the guidance field, defining the forcing function. 
We provide a discussion on Euler-Lagrange equations and the fact that solutions to Poission's equation can be realized as unique minimizers of variational problems in Appendix~\ref{appdx: poisson's equation}, specifically, Theorem~\ref{thm: dirichlet principle}. 

A smooth guidance field, \( \bvv \in C^\infty(\Occ;\re^3)\), implies \( \divv \in C^\infty(\Occ)\) in \eqref{eq: poisson's eq div}, and following from Theorem~\ref{thm:regularity}, this leads to a smooth solution, \(h \in C^\infty(\overline \Oc)\). We obtain a smooth guidance field by solving Laplace's equation---the homogenous version of Poisson's equation. 
Specifically, consider the vector field $\vec{\bv} = (v_x, v_y, v_z) :\overline{\Oc} \rightarrow \re^3$, with each component satisfying
Laplace's equation subject to Dirichlet boundary conditions:
\begin{gather}\label{eq: guidance field}
\left \{
    \begin{aligned}
    \Delta v_i(\by) &= 0 & \text{in } \Omega, \\
    v_i(\by) &= b(\by) n_i(\by) & \text{on } \partial \Omega,
    \end{aligned}
    \right.
\end{gather}
for $i \in \{x, y, z\}$. The terms \( n_x, n_y, n_z\) represent the components of the outward unit normal vector \( \hat{\mathbf{n}} = (n_x, n_y, n_z):\partial \Oc \rightarrow \re^3  \) such that $\bvv(\by) = b(\by) \hat{\bn}(\by)$  on \( \partial \Omega \).
The term
\( b : \pOc \to \mathbb{R}_{<0} \) prescribes the outward directional derivative encoding the desired boundary flux:
\begin{align}\label{eq: desired boundary flux}
    \oiint_{\pOc} \vec{\bv}(\by)\cdot \hat{\bn}(\by) \, \mathrm{d}A &=  \oiint_{\pOc} b(\by)\hat{\bn}(\by)\cdot \hat{\bn}(\by) \, \mathrm{d}A \nonumber \\ 
    &=  \oiint_{\pOc} b(\by) \, \mathrm{d}A.
\end{align}
Following from the mean value theorem, solutions to \eqref{eq: guidance field} are smooth \cite{evans2022partial}, meaning $\vec{\bv}$ belongs to the set of functions:
\begin{align}\label{eq: smooth admissible set guidance field}
   \Vc = \left\{ \vec{\bv} \in C^\infty(\Occ; \mathbb{R}^3)  \, \big| \, \vec{\bv}(\by) = b(\by) \hat{\bn}(\by)  \text{ on }  \partial \Omega \right\}.
\end{align}
Fig. \ref{fig: guidance field} depicts an example a guidance field generated via (\ref{eq: guidance field}).
%
%

Due to the decoupled nature of its components, \(\vec{\bv}\) may not be \textit{conservative}, meaning it may not correspond to the gradient of a scalar potential function \cite{marsden2003vector,kellogg2012foundations}. To address this, \eqref{eq: poisson's eq div} ensures we find $h \in C^\infty(\overline{\Oc})$ whose gradient best approximates
$\vec \bv$ by using its divergence, $\divv$, as the forcing function. The boundary flux error is given by $Dh(\by)\cdot\hat{\bn}(\by) - b(\by)$  on $\pOc$. However, the condition $\nabla \cdot \vec \bv(\by) <0 $  may not necessarily hold for all $\by \in \Oc$, which is sufficient to guarantee $h(\by) >0$ in $\Omega$. To remedy this, we introduce the forcing function\footnote{Inspired by the $\mathrm{softplus}$ function, this preserves the smoothness and negativity of $f$, ensuring the solution to \eqref{eq: poisson's eq} remains superharmonic in $\Oc$.
}
:
\begin{align}\label{eq: softplus forcing function}
    f(\by) = -\frac{1}{\beta}\ln(1+e^{-\nabla \cdot \vec \bv(\by)\beta}),
\end{align}
with $\beta >0$. This defines a smooth, negative forcing function  $f \in C^\infty(\Oc; \re_{<0})$, which yields a solution $h \in C^\infty(\Occ; \re_{\geq 0})$ to \eqref{eq: poisson's eq} as established in Theorem~\ref{thm: main thm Safety Value Function}. 
It further follows from Theorem~\ref{thm: dirichlet principle} that this solution is the 
unique minimizer of the
variational problem (dropping dependency on $\by$ for brevity):
\begin{gather}
   \! \min_{h\in \Hc} \left \{ \! J[h] \!=\!\!  \iiint_{\Oc} \frac{1}{2}|D h|^2  \!-\! \frac{h}{\beta}\ln(1 +e^{-\nabla \cdot \vec \bv\beta}) \mathrm{d}V \!\right \},\\
    \Hc =\big\{ h \in C^\infty(\Occ) \, \big | \, h = 0  \text{ on } \partial \Oc \big \}.
\end{gather}
Example safety functions generated with the approach are depicted in Fig.~\ref{fig: forcing funcs}. 
 Our next goal is to establish the forward invariance of the safe set $\Cc$, characterized by the safety function $h$ as in \eqref{eq: safe set characterizations}, for time-parameterized curves $t \mapsto \bvy(t)$ describing the evolution of a dynamical system.
\section{Safety-Critical Control via Poisson Safety Functions}\label{sec: CBFs from Poisson}
In this section, we show that a safety function $h$ constructed from Theorem~\ref{thm: main thm Safety Value Function} can be used to define a CBF. We establish that $\Cc$ can be rendered forward invariant for first-order systems while a subset of $\Cc$ can be rendered forward invariant for high-order systems.
We focus on systems defined by integrator chains as \eqref{eq: linear output dynamcis1},  with the input appearing at the last layer---note that our method can be extended to classes of systems with 
outputs of nonuniform relative degree, such as those in  \cite{cohen2024constructive,bahati2025control}. For the spatial outputs \eqref{eq: spatial coordinates}, we have $p =3$ in \eqref{eq: linear output dynamcis1}. 
We begin by discussing forward invariance for first-order systems.
\subsection{First Order Systems}
Consider the following system of relative degree $r = 1$:
\begin{align}\label{eq: r =1 system}
     \dot{\bvy} = \bw,
\end{align}
where the state $\bvy \coloneqq \by \in \re^3$. The following proposition establishes the forward invariance of $\Cc$ with respect to \eqref{eq: r =1 system}.
\begin{proposition}\label{lemma: relative degree 1}(Forward Invariance for First-Order Systems) Let $\Omega \subset \re^3$ be an open, bounded and connected set with piecewise smooth boundary $\partial \Oc$. 
Consider the system \eqref{eq: r =1 system} and a safe set $\Cc$ defined as the $0$-super level set of a function $h: \overline \Oc \rightarrow \re$. Suppose that $h$ is the solution to \eqref{eq: poisson's eq} with $f \in C^{0,\alpha}(\Oc;\re_{< 0})$ for some $\alpha \in (0,1)$. Then  given $\bvy(0) = \by_0 \in \Cc$, there exists a locally Lipschitz continuous controller $ \bw = \bk(\bvy)$ such that for any $\gamma >0$: 
\begin{align}\label{eq: hdot single int}
    \dot{h}(\bvy) = D h(\bvy) \cdot \bk(\bvy) \geq -\gamma h(\bvy) \quad \forall \, \bvy \in \Cc.
\end{align}
%
Thus, $\Cc$ is forward invariant and $h$ is a CBF for \eqref{eq: r =1 system}. 
\end{proposition}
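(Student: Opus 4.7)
The plan is to reduce the claim to a direct application of the standard CBF existence/forward-invariance machinery (in the style of \cite{AmesTAC17}), using Theorem~\ref{thm: main thm Safety Value Function} to supply the two ingredients CBF theory requires: enough regularity of $h$, and non-vanishing of $Dh$ on the zero-level set. Specifically, since $f\in C^{0,\alpha}(\Oc;\re_{<0})$, Theorem~\ref{thm: main thm Safety Value Function} gives $h\in C^{2,\alpha}(\Occ)$ with $\Cc=\Occ$, $\partial\Cc=\pOc$, and the Hopf inequality $D h(\by)\cdot\hat{\bn}(\by)<0$ on $\pOc$. In particular $Dh(\by)\neq \mathbf{0}$ whenever $h(\by)=0$, so $h$ satisfies the nonsingularity hypothesis in the CBF definition.

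First I would verify the CBF condition \eqref{eq: CBF main def} pointwise on $\Cc$. Because \eqref{eq: r =1 system} is a single integrator, the inequality in \eqref{eq: hdot single int} reduces to $Dh(\bvy)\cdot\bw\geq-\gamma h(\bvy)$, with $\bw\in\re^3$ chosen freely. Split $\Cc$ into $\interior(\Cc)=\Oc$ and $\partial\Cc=\pOc$. On $\interior(\Cc)$ one has $h(\bvy)>0$, hence $-\gamma h(\bvy)<0$, so the choice $\bw=\mathbf{0}$ already satisfies the strict inequality irrespective of $Dh$. On $\partial\Cc$ one has $h(\bvy)=0$ and, by Hopf's lemma, $Dh(\bvy)\neq\mathbf{0}$; then $\bw=Dh(\bvy)^\top$ gives $Dh(\bvy)\cdot\bw=\|Dh(\bvy)\|^2>0=-\gamma h(\bvy)$. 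Thus the supremum in \eqref{eq: CBF main def} is strictly greater than $-\gamma h(\bvy)$ on all of $\Cc$, so $h$ is a CBF for \eqref{eq: r =1 system} for every $\gamma>0$.

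Next I would exhibit an explicit locally Lipschitz selection to witness \eqref{eq: hdot single int}. The simplest candidate is $\bk(\bvy)=\kappa\,Dh(\bvy)^\top$ for any $\kappa>0$; since $h\in C^{2,\alpha}(\Occ)$, $Dh$ is $C^{1,\alpha}$ and thus locally Lipschitz on $\Occ$, so $\bk$ inherits this. Then
\begin{equation*}
Dh(\bvy)\cdot\bk(\bvy)=\kappa\|Dh(\bvy)\|^2\geq 0\geq-\gamma h(\bvy)\qquad\forall\,\bvy\in\Cc,
\end{equation*}
where the last inequality uses $h(\bvy)\geq 0$ on $\Cc$. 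Hence $\bk\in\Kc_{\mathrm{CBF}}$ on $\Cc$, and any $\kappa>0$, $\gamma>0$ work.

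Finally, forward invariance follows from the standard CBF result \cite{AmesTAC17}: with $\bk$ locally Lipschitz and satisfying $\dot h\geq-\gamma h$ along \eqref{eq: r =1 system}, the comparison lemma gives $h(\bvy(t))\geq e^{-\gamma t}h(\by_0)\geq 0$ on $I_{\max}(\by_0)$, i.e., $\bvy(t)\in\Cc$ for all $t\in I_{\max}(\by_0)$. I do not anticipate a real obstacle here: the only nontrivial step is invoking Hopf's lemma to guarantee $Dh\neq\mathbf{0}$ on $\partial\Cc$, but this is already embedded in Theorem~\ref{thm: main thm Safety Value Function} via the piecewise smoothness of $\pOc$ and the interior sphere property.
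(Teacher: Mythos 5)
Your proposal is correct and follows essentially the same route as the paper: regularity and the safe-set characterization from Theorem~\ref{thm: main thm Safety Value Function}, Hopf's lemma to get $Dh\neq\mathbf{0}$ on $\partial\Cc$, an explicit locally Lipschitz controller witnessing \eqref{eq: hdot single int}, and Nagumo/comparison to conclude invariance. The only difference is the choice of witness: the paper uses the Sontag-type universal formula $\bk(\bvy)=\lambda(\gamma h,\|Dh\|^2)Dh^\top$ (the prototype of a minimally invasive safety filter), whereas your constant-gain gradient ascent $\bk(\bvy)=\kappa\,Dh(\bvy)^\top$ is simpler and equally sufficient for the existence claim.
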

\begin{proof}
From Theorem~\ref{thm: main thm Safety Value Function}, we have that $f \in C^{0,\alpha}(\Oc;\re_{< 0})$ implies that $h \in C^{2,\alpha}(\overline{\Omega};\re_{\geq 0} )$ defining a safe set $\Cc = \overline{\Oc}$ as described in \eqref{eq: safe set characterizations}. Furthermore, we have that $D h(\by) \neq 0$ when $h(\bvy)=0$, that is, when $\by \in \partial \Oc = \partial \Cc$. Given the system \eqref{eq: r =1 system} with initial condition $\bvy(0) = \bvy_0 \in \Cc$, the following locally Lipschitz continuous controller \cite{CohenLCSS23,SontagUniversal}:
\begin{equation}\label{eq: sontag}
\begin{aligned}
    \bk(\bvy) &\coloneqq \bk(\by, h(\by), D h(\by)) \\
   &= \lambda(\gamma h(\by), \|Dh(\by)\|^2)D h(\by)^\top \\
    \lambda(a,b) &= \begin{cases} 0, &b = 0\\
    \frac{-a + \sqrt{a^2 +\sigma b^2}}{2b}, & b\neq 0
\end{cases}
\end{aligned},
\end{equation}
with $\sigma>0$ satisfies \eqref{eq: hdot single int} for all $\bvy \in \Cc$ such that when $h(\bvy) = 0$, we have $\bk(\bvy) = \frac{1}{2}\sigma^\frac{1}{2} Dh(\by)^\top $ yielding
$\dot{h}(\bvy) = \frac{1}{2}\sigma^\frac{1}{2} \|D h(\by)\|^2 >0
$. Thus, from Nagumo's theorem, the set $\Cc$ is a rendered forward invariant and $h$ is a CBF for \eqref{eq: r =1 system}. 
\end{proof}

\subsection{CBF Backstepping}

Given a safety function $h$ satisfying Theorem~\ref{thm: main thm Safety Value Function}, we now establish safety for high-order systems  \eqref{eq: linear output dynamcis1} of relative degree $r \geq 2$ by leveraging finite-dimensional backstepping to construct a CBF \cite{taylor2022safe}. CBF backstepping is a design technique that recursively constructs auxiliary controllers for each layer of the full-order system dynamics, providing a systematic approach to designing CBFs for linear systems modeled as integrator chains \cite{cohen2024constructive}. For convenience, we begin by considering systems of relative degree $r= 2$. These results can be extended to relative degree $r\in\mathbb{N}$, as demonstrated in Appendix~\ref{appdx: higher-order cbfs}.

Consider the state $\bvy = \big(\by, \dot{\by}\big) \in \re^6$ and dynamics:
\begin{align}\label{eq: r =2 system}
     \dot{\bvy} = \begin{bmatrix}
         \dot{\by} \\
     \ddot{\by} 
     \end{bmatrix} = \begin{bmatrix}
         \dot{\by} \\
        \bw
     \end{bmatrix}.
\end{align}
Specifically, let  $h \in C^{2+k,\alpha}(\overline{\Oc})$  be the solution to \eqref{eq: poisson's eq} satisfying Theorem~\ref{thm: main thm Safety Value Function}. Define the following function:
\begin{align}\label{eq: second order h_B}
h_\mathrm{B}(\bvy) = h(\by) - \frac{1}{2\mu_1}\|\dot{\by} -\bk_1(\by, h(\by), D h(\by))\|^2,
\end{align}
with $\mu_1 >0$ where $\bk_1 \in C^{2}(\re^3 \times \re \times \re^3;\re^3)$ is an auxiliary controller satisfying:
\begin{align}\label{eq: k1 for r=2}
    D h(\by) \cdot  \bk_1(\by, h(\by), D h(\by)) > -\gamma h(\by),
\end{align}
for all $\by \in \Cc$ and some $ \gamma > 0$ such as \eqref{eq: sontag}. 
The 0-superlevel set of the function $h_\mathrm{B}$ defines the shrunken set:
\begin{align}\label{eq: r=2 safe set}
    \Cc_\mathrm{B} = \left\{\bvy \in \re^6  \, \big| \, h_\mathrm{B}(\bvy) \geq 0 \right \} \subset \Cc \times \re^3.
\end{align}
Since $h_\mathrm{B}(\mb{y}) \leq h(\mb{y}) $ for all $\mb{y}\in \mathcal{C}$, we can ensure that all trajectories that start in $\mathcal{C}_\mathrm{B}$ also remain in $\mathcal{C}$ by rendering $\mathcal{C}_\mathrm{B}$ safe. This is possible because $\vec{\mb{y}} \in \mathcal{C}_\mathrm{B}  \cap \partial\mathcal{C}  $ implies that $\dot{\mb{y}} = \mb{k}_1(\mb{y}, h(\mb{y}), Dh(\mb{y})) $ and therefore, the condition \eqref{eq: k1 for r=2} ensures that $\dot{h}(\by, \dot{\by}) = Dh(\by) \cdot \dot{\by} > 0$ at this point $\vec{\mb{y}}$. 
Taking the time derivative of \eqref{eq: second order h_B} yields:
\begin{align}\label{eq: hdot r=2}
     \dot{h}_\mathrm{B}(\bvy, \bw) = Dh(\by) \cdot \dot{\by} - \frac{1}{\mu_1}(\dot{\by} -\bk_1)^\top (\bw - \dot{\bk}_1),
\end{align}
where $\dot{\bk}_1(\cdot) = \frac{\partial \bk_1}{\partial \by}(\cdot)\dot{\by} \coloneqq \Phi_1(\by, \dot{\by}, Dh(\by), D^2h(\by))$ with:
\[
\frac{\partial \bk_1}{\partial \by}= \frac{\partial \bk_1}{\partial \by} + \frac{\partial \bk_1}{\partial h} \frac{\partial h}{\partial \by}  + \sum_{i \in \{x,y,z\}}\frac{\partial \bk_1}{\partial (\partial_i h)} \frac{\partial (\partial_i h)}{\partial \by} 
.
\]

To construct a locally Lipschitz continuous controller $ \bw=\mb{k}(\bvy)$ that enforces safety, we require the second partial derivatives of $h$, denoted by the Hessian $D^2h$ to be Lipschitz continuous \ie $D^2h \in C^{0,1}(\overline{\Oc}; \re^{3\times 3})$. 
The following lemma establishes conditions for this property to hold.

\begin{lemma}\label{lemma: Lipschitz gradients}(Lipschitz Regularity of $D^rh$) Let $r \in \mathbb{N}$ and $f \in C^{k,\alpha}(\Occ)$ for some $\alpha \in (0, 1)$ and $k \in \mathbb{N}_0$. If $r \leq k+1 $, then the solution \( h \in C^{2+k,\alpha}(\overline \Oc) \) to  \eqref{eq: poisson's eq}  satisfies $D^rh \in C^{0,1}(\overline{\Oc}; \re^{3^r})$.
\end{lemma}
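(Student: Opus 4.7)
The plan is to chain together two classical facts: elliptic regularity of Poisson's equation, which upgrades H\"older regularity of $f$ to H\"older regularity of $h$ two derivatives higher, and the elementary observation that a function with a continuous (hence bounded) gradient on a compact, well-behaved domain is Lipschitz. The hypothesis $r \leq k+1$ is precisely what buys us one extra derivative beyond order $r$ within the regularity class guaranteed by Theorem~\ref{thm:regularity}.

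Concretely, I would proceed as follows. First, I invoke Theorem~\ref{thm:regularity} on $f \in C^{k,\alpha}(\overline{\Omega})$ with piecewise smooth boundary to obtain $h \in C^{2+k,\alpha}(\overline{\Omega})$; this is stated in the excerpt and furnishes the regularity of all derivatives of $h$ up to order $2+k$. Second, I observe that since $r \leq k+1$, any partial derivative $D^{\xi} h$ with $|\xi| = r$ admits all of its first partials $\partial_i (D^\xi h) = D^{\xi + e_i} h$, of order $r+1 \leq k+2 = 2+k$, and these lie in $C^{0,\alpha}(\overline{\Omega})$ by the Schauder conclusion. Thus $D^r h$ is continuously differentiable on $\overline{\Omega}$, and its gradient $D^{r+1} h$ is continuous on the compact set $\overline{\Omega}$, hence uniformly bounded:
\begin{equation*}
    M \;\coloneqq\; \sup_{\by \in \overline{\Omega}} \|D^{r+1} h(\by)\| \;<\; \infty.
\end{equation*}

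Third, I apply the mean value inequality componentwise to pass from pointwise bound on $D^{r+1} h$ to a Lipschitz bound on $D^r h$. For any two points $\by_1, \by_2 \in \overline{\Omega}$ that can be joined by a rectifiable curve $\sigma \subset \overline{\Omega}$, integration along $\sigma$ yields $\|D^r h(\by_1) - D^r h(\by_2)\| \leq M \cdot \mathrm{length}(\sigma)$. The remaining step is to control $\mathrm{length}(\sigma)$ by a constant multiple of $\|\by_1 - \by_2\|$, i.e., to verify that $\overline{\Omega}$ is \emph{quasi-convex}. This is where I expect the only real work to lie: for a general open connected set the Euclidean and intrinsic distances need not be comparable, but for the piecewise smooth, bounded, connected domains considered in \eqref{eq: piecewise boundary} (which are in particular Lipschitz domains), a standard covering argument using local straightenings of the boundary shows quasi-convexity with a finite constant $L_\Omega$. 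Combining the estimates gives
\begin{equation*}
    \|D^r h(\by_1) - D^r h(\by_2)\| \leq (M L_\Omega)\,\|\by_1 - \by_2\|
\end{equation*}
for all $\by_1, \by_2 \in \overline{\Omega}$, which is exactly $D^r h \in C^{0,1}(\overline{\Omega}; \mathbb{R}^{3^r})$. The obstacle, as noted, is geometric rather than analytic: without the quasi-convexity of $\overline{\Omega}$ one only obtains \emph{local} Lipschitz continuity, so the piecewise smoothness assumption on $\partial \Omega$ is doing essential work and should be cited explicitly.
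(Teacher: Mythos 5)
Your proof is correct and follows the same two-step route as the paper: first invoke Theorem~\ref{thm:regularity} to get $h \in C^{2+k,\alpha}(\overline{\Omega})$, hence $D^r h \in C^{2+k-r,\alpha}(\overline{\Omega})$ with $2+k-r \geq 1$, and then embed that class into $C^{0,1}(\overline{\Omega})$. The only difference is in how the embedding is justified: the paper simply cites the inclusion $C^{1,\alpha}(\overline{\Omega}) \subset C^{0,1}(\overline{\Omega})$, whereas you unpack it --- boundedness of $D^{r+1}h$ on the compact closure, the mean value inequality along curves, and quasi-convexity of $\overline{\Omega}$ to compare intrinsic and Euclidean distances. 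Your version is the more careful one: the inclusion ``bounded continuous gradient $\Rightarrow$ Lipschitz'' genuinely fails on domains with outward cusps, so the geometric hypothesis you isolate is doing real (if routine) work that the paper leaves implicit. Since the boundary here is a finite union of disjoint smooth closed surfaces, each boundary point has a neighborhood in which $\partial\Omega$ is a smooth graph, so the domain is Lipschitz and hence quasi-convex, and your argument closes. One cosmetic remark: you only use $D^{r}h \in C^{1}(\overline{\Omega})$, not the H\"older continuity of $D^{r+1}h$, which is fine --- the extra $\alpha$ is simply not needed for the Lipschitz conclusion.
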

\begin{proof}
    From Theorem~\ref{thm:regularity}, one can verify that the $r$-th partial derivatives \( D^rh \in C^{2+k-r,\alpha}(\overline \Oc;\re^{3^r} )\). For the case when \( r  = k+1 \), we have that $C^{1,\alpha} \subset C^{0,1}$. Thus, it follows that \( C^{2+k-r,\alpha} \subset C^{0,1} \) for \( r \leq k+1 \), which implies that \( D^rh \) is Lipschitz continuous on the set $\overline{\Oc}$.
\end{proof}
The above lemma relies on the inclusion of continuously differentiable functions in Lipschitz continuous function spaces \cite{brezis2011functional}.
Leveraging the above result, it follows from \cite[Theorem 4]{taylor2022safe} that there exists a locally Lipschitz continuous controller that renders \( \Cc_\mathrm{B} \) forward invariant. Specifically, if \( {\bvy}_0 = (\by_0,  \dot{\by}_0) \in \Cc_\mathrm{B}\), then \( {\bvy}(t) \in \Cc_\mathrm{B} \) for all \( t \in I_\mathrm{max}({\bvy}_0) \). Thus, $h_\mathrm{B}$ is a CBF for \eqref{eq: r =2 system}.

\begin{remark}(Hölder Regularity and Control Design)
    The results in \cite{ cohen2024constructive,taylor2022safe} rely on the assumption that $h$ is smooth, \ie $h\in C^\infty(\Occ)$, which holds if $f \in C^\infty(\Occ)$ as established in Theorem~\ref{thm:regularity}. In this paper, we relax this assumption and only require Hölder continuity, $h\in C^{2+k,\alpha}(\Occ)$, therefore, allowing for a wider range of methods for constructing forcing functions that may not be smooth but are instead Hölder continuous, $f\in C^{k,\alpha}(\Occ)$. This relaxed regularity assumption motivates the use of Lemma~\ref{lemma: Lipschitz gradients} to guarantee the existence of a Lipschitz continuous controller on $\Occ$.
\end{remark}

\begin{figure*}[t!] 
    \centering    \includegraphics[width=1.\linewidth]{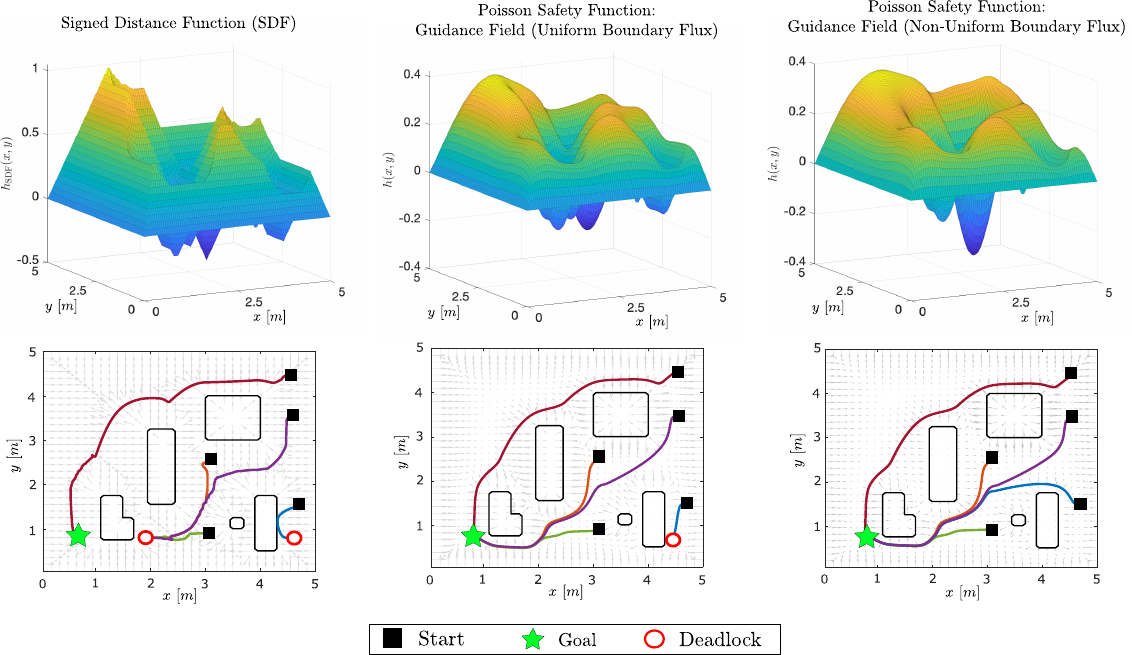}
    \caption{Double integrator simulations using safety filters synthesized from: \textbf{[left]} Signed Distance Function \eqref{eq: SDF}; and \textbf{[middle and right]} the Poisson Safety Function, constructed with the forcing function \eqref{eq: softplus forcing function} with the guidance field \eqref{eq: guidance field} where the boundary conditions use \textbf{[middle]} a uniform boundary flux $b(\by) =-1$ for all $\by \in \pOc$ and \textbf{[right]} a non-uniform boundary flux $b: \pOc \to \mathbb{R}_{<0}$, assigning different flux values across regions of the boundary associated with different obstacles. 
    Sharp ridges in the SDF surface introduce gradient discontinuities, which lead to oscillations in the resulting trajectories---an issue which does not arise with Poisson safety functions due to their classical regularity (\ie differentiability) properties. Furthermore, in contrast to SDFs with fixed gradients, the guidance field promotes the manipulation of boundary flux values via the assignment of $b$ in \eqref{eq: guidance field}, enabling the ability to encode gradients customized to specific obstacles, which we are unable to do with traditional SDFs. This flexibility helps yield trajectories that avoid undesired equilibria \ie “deadlocks”, in \textbf{[right]}.}  
    \vspace{-5mm}
    \label{fig: sdf vs poisson}
\end{figure*}
\section{Demonstrations}
To demonstrate the proposed algorithm's effectiveness in synthesizing safety filters for autonomous systems in complex environments, we perform numerical simulations and hardware experiments. We solve Poisson's equation \eqref{eq: poisson's eq} numerically on a discrete spatial grid by employing a finite difference scheme, specifically the Successive Overrelaxation (SOR) method \cite{strikwerda2004finite}. 
\subsection{Simulations: Double Integrator}
We define a 2D occupancy map defined by an open, bounded and connected domain $\Oc$ where $\pOc$ characterizes obstacle surfaces, and consider the 2D double integrator model of the form \eqref{eq: r =2 system} with state $\bvy = \big(\by, \dot{\by}\big) = \big(x, y, \dot{x}, \dot{y}\big)\in \re^4$. We consider the task of stabilizing \eqref{eq: r =2 system} to a goal position, from various initial conditions, with a nominal PD controller $\bk_\nom(\bvy)$, while avoiding collisions with the obstacle surfaces on $\partial \Oc$. To achieve this, we consider the solution $h$ to \eqref{eq: poisson's eq} with the forcing function \eqref{eq: softplus forcing function} and construct a CBF via backstepping \eqref{eq: second order h_B}. Using this CBF, we synthesize a safety filter to adjust $\bk_\mathrm{\nom}$ to ensure safety. The resultant trajectories for various initial conditions are depicted in Fig~\ref{fig: sdf vs poisson}. We compare the behavior of this safety filter to one constructed via an SDF:
\begin{align}\label{eq: SDF}
    h_\mathrm{SDF}(\by) = 
    \begin{cases}
        \mathrm{dist}(\by, \partial \Oc), & \by \in \Occ , \\
        -\mathrm{dist}(\by, \partial \Oc), & \by \notin \Occ, \\
    \end{cases}
\end{align}
where we also employ backstepping to construct a CBF. 

Traditional SDFs have fixed gradient magnitudes, leading to flat surfaces (\ie without curvature) in the resulting safety function, as seen in Fig.~\ref{fig: sdf vs poisson}. They also possess ridges with discontinuous gradients (of opposite sign) within unoccupied regions of the domain, which, unlike Poisson safety functions, result in undesirable chattering behavior when employed in safety filter synthesis. Additionally, Poisson safety functions constructed using the forcing function \eqref{eq: softplus forcing function} and the guidance field \eqref{eq: guidance field} enable the assignment of arbitrary boundary flux values $b$ via the boundary conditions in \eqref{eq: guidance field}. Consequently, different flux values can be specified across regions of the boundary associated with different obstacles. This provides the flexibility to manipulate gradients around obstacles, which, among other benefits, aids in yielding trajectories that avoid undesired equilibria as demonstrated in Fig.~\ref{fig: sdf vs poisson}.

\begin{remark}(Distance Functions via Elliptic PDEs) 
While prior work has approximated SDFs using elliptic PDEs \cite{belyaev2015variational}, often relying on semilinear equations such as the \textit{screened} Poisson equation \cite[Thm 2.3]{varadhan1967behavior},
our method intentionally does not produce an SDF. Instead, we leverage linear elliptic PDEs with classical regularity results as established in Theorem~\ref{thm:regularity}, which are critical in the synthesis of safety filters.   
\end{remark}
\subsection{Hardware Experiments}

To demonstrate the practical performance of our proposed algorithm in synthesizing safe sets, we applied it to several collision avoidance scenarios using Unitree's Go2 quadruped and G1 humanoid robots. We leverage a reduced order model (ROM) hierarchical framework \cite{TamasRAL22, CohenARC24} for these platforms based on (\ref{eq: r =1 system}). These ROMs assume that the robotic system has a \textit{sufficient} low-level tracking controller, enabling the assignment of safe velocity commands without modification of unique low-level locomotion controllers. 

First, we perceive and segment the environment using a fixed RGB camera and the Meta SAM2 \cite{ravi2024sam2segmentimages} segmentation algorithm. Next, we generate a 2D occupancy map, buffered for robot size. Finally, we solve \eqref{eq: guidance field} to generate a smooth guidance field, and, using \eqref{eq: softplus forcing function}, we solve \eqref{eq: poisson's eq} for the Poisson safety function. 
For dynamic environments, we improve the computational speed of our PDE solver by warm-starting each PDE solution with the previous safety function, producing solve times of $0.2 - 0.3$ ms. After considering the entire processing chain, we update the Poisson safety function $h$ online at approximately 10 Hz. 

We employ this safety function in a CBF-based safety filter for a single integrator ROM (\ref{eq: r =1 system}) to produce safe velocity commands that are tracked by a low-level controller. Robot states are estimated by an OptiTrack motion capture system. 

\subsubsection{\textbf{Complex Static Environment -- Quadruped}}

For the first experiment, we constructed a static environment. We initialized the quadruped at three different starting locations and commanded it to walk to a fixed goal point. In each case, the nominal controller attempted to drive the system directly to the goal without safety considerations. Meanwhile, the CBF-based safety filter modified these nominal commands. The results corresponding to this experiment are depicted in Fig~\ref{fig: experimental results2}.

From these results, it is clear that the Poisson safety function enabled collision avoidance without hindering the nominal objective. The control inputs (Fig~\ref{fig: experimental results2} (bottom left/middle)) show how the safety-filtered velocity commands deviated from nominal, as necessary. The evaluated value of $h(x,y)$ maintained positivity throughout the duration of all three experiments, confirming that the robot never departed from the safe set $\mathcal{C}$.  

\subsubsection{\textbf{Dynamic Environment -- Quadruped}}

To further demonstrate the real-time utility of our approach, we consider a dynamically changing environment. As mentioned earlier, we compute the Poisson safety function online at approximately 10 Hz; however, this time we numerically incorporate the temporal change in $h$ to approximate its time derivative via:
\begin{align}
     \pdv{h}{t}(t, \by) \approx \frac{h(t_k,\by) -  h(t_{k-1},\by)}{t_k - t_{k-1}}.  
\end{align}
\begin{figure*}[t!] 
    \centering    \includegraphics[width=1.\linewidth]{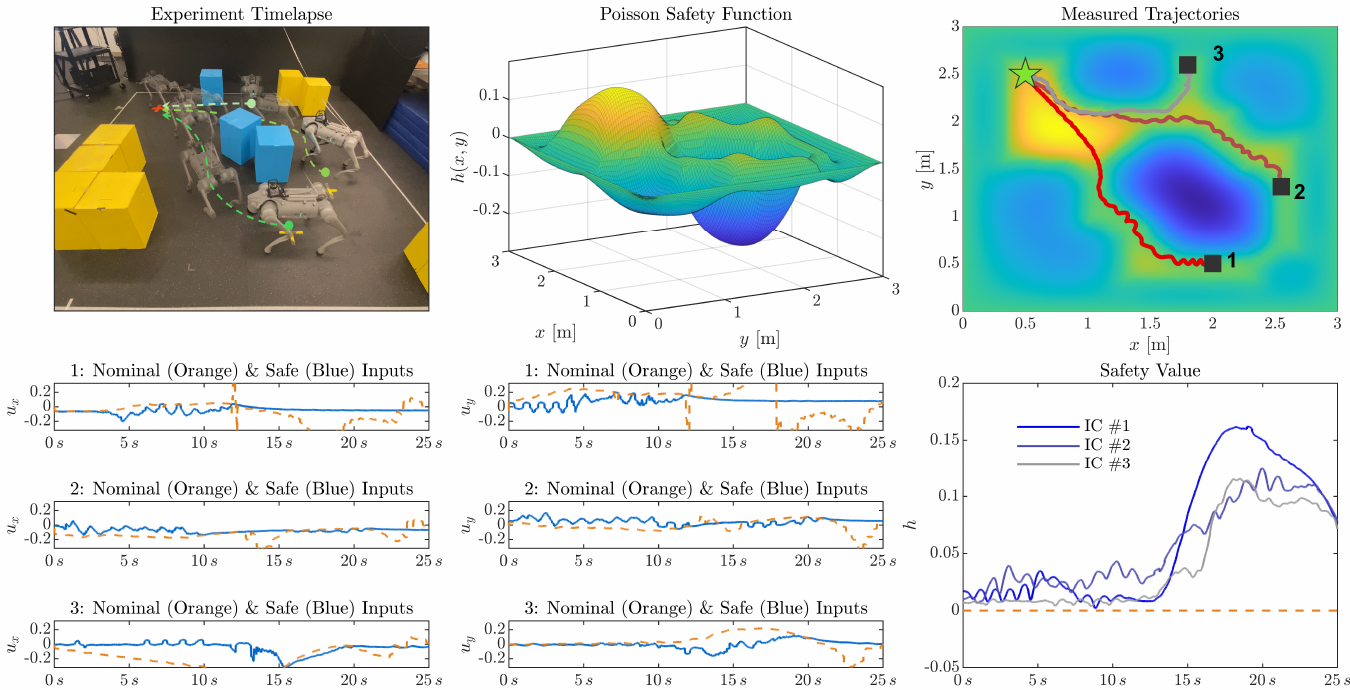}
    \caption{Hardware experiments demonstrating \textit{Poisson safety functions} for safety filtering. [Top Left] A timelapse showing the motion of the Go2 quadruped during the 25-second tracking experiment, starting from three difference initial conditions (ICs). [Top Middle] The Poisson safety function  constructed from real-time segmented image data. [Top Right] The resultant safe trajectories for each IC. [Bottom Left/Middle] The nominal (orange) and CBF safety-filtered (blue) velocity commands, in m/s, sent to the robot. [Bottom Right] The evaluated value of the Poisson safety function over the course of each experiment. This value remains above zero, confirming that safety was maintained.}  
    \vspace{-5mm}
    \label{fig: experimental results2}
\end{figure*}
We commanded a Go2 quadrupedal robot to track a nominal sinusoidal position reference; meanwhile, we introduced two obstacles (i.e., a desk chair and a yellow box) into the environment. By dynamically rearranging these obstacles, we tested the real-time efficacy of our safety-critical architecture. Fig~\ref{fig: experimental results} shows the dynamic performance. 

Examining the value of $h$ during the experiment, it can be observed that the robot effectively employed its safety filter to avoid collisions. The value of $h$ remained positive, confirming that the safe set $\mathcal{C}$ was rendered forward invariant.

\subsubsection{\textbf{Dynamic Environment -- Humanoid}}

We implemented an identical safety-critical controller on the G1 humanoid to show the versatility of our algorithm. We commanded a fixed position reference and introduced a moving obstacle into the environment. The results are presented in Fig~\ref{fig: experimental results}. 

During the humanoid experiment, the CBF-based safety filter was effective in preventing collisions with the dynamic obstacle; however, the robot experienced minor safety violations, corresponding to moments when $h$ was briefly negative. The difference in performance on the quadruped highlights the importance of accurate ROMs when designing safety-critical controllers~\cite{CohenARC24}. In both cases, we filtered velocity commands (single-integrator \eqref{eq: r =1 system}) to enforce safe set forward invariance. This is an accurate ROM for the quadruped, as the low-level locomotion controller effectively tracks velocity inputs with minimal error. Unfortunately, this ROM is less accurate for the humanoid, which struggles to track velocity commands due to model mismatch. The lag in velocity tracking manifested in modest departures from the safe set (\ie brief moments when $h$ was negative). 

\begin{remark}
(Real-time Computation)
     For the hardware experiments in this paper, we consider a $3\times3$ $\text{m}^2$ experimental environment, and we solve the 2D Poisson equation on an $N$$\times$$N$ discretized grid with $N=120$. We apply a SOR finite difference scheme \cite{strikwerda2004finite}, using \textit{checkerboard} iterations for GPU parallelization. Parallel SOR iterations scale with $\sqrt{N}$, and computations per iteration scale with $N^2$ ($N^3$ in 3D). On a GeForce RTX 4070 GPU, we solve the PDE in $0.2-0.3$ ms with $10^{-4}$ residual tolerance. 
 \end{remark}

\section{Limitations}
A fundamental limitation of the proposed algorithm (and a limitation of all non-predictive safety filters) is that such safety-critical controllers may introduce undesired equilibria. These equilibria can manifest as ``deadlocks", where the system becomes trapped by obstacles and fails to achieve nominal tracking. Such problems are typically solved by introducing a navigational layer to the nominal controller to avoid regions in the state-space that produce these undesired equilibria. 

Another limitation of the proposed approach is that the extension of the solution $h$, defined on $\Occ$, to the unsafe regions $\Gamma_i$, via Poisson's equation is guaranteed only to be Lipschitz continuous across safe and unsafe regions (\ie across the boundary). The reason is that solving Poisson's Dirichlet problem results in unique solutions in each region ensuring $h(\by)= 0$ on $\pOc$ (hence, continuity across $\pOc$). As a result, the directional derivatives from the safe and unsafe regions are not guaranteed to be equal in magnitude on $\pOc$. However, from Hopf's lemma, one can verify that they have the same sign \ie positive flux from each obstacle $\Gamma_i$ and negative flux into the safe region $\Oc$. Thus, if safety violations occur, there exists a locally Lipschitz continuous controller enforcing attractivity to the safe set for a single integrator, but may require a less regular controller \cite{coron2007control} for higher order systems. One way of obtaining a more regular $h$ across these two regions is by a smooth extension of the solution $h$ defined on $\Occ$ into $\Gamma_i$. These extensions however, may require detailed geometrical characterizations of obstacle interiors, whereas Poisson's equation itself only requires information about the obstacle surfaces. 
Another approach is to employ mollifiers \cite{gilbarg1977elliptic} to regularize $h$ across these regions.

\begin{figure*}[t!] 
    \centering    \includegraphics[width=1.\linewidth]{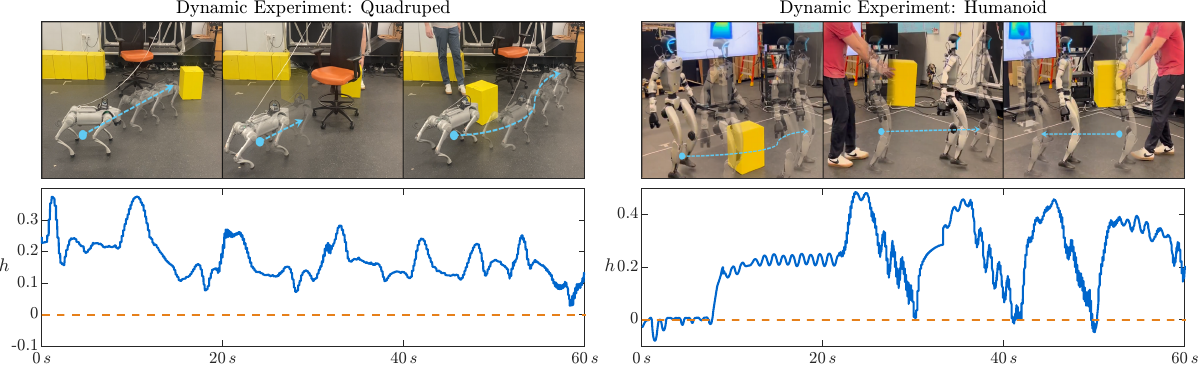}
    \caption{Hardware experiments demonstrating dynamic behavior on quadrupedal and humanoid robots. [Top Left] A timelapse showing the Go2 quadruped successfully avoiding dynamic complex obstacles in real-time. [Bottom Left] The value of the Poisson safety function $h$ during the experiment. The positivity of $h$ certifies that safety was maintained. [Top Right] A timelapse of the G1 humanoid during a dynamic collision avoidance demonstration.  [Bottom Right] The value of $h$. Although no collisions occurred, the robot briefly left the safe set $\mathcal{C}$.} 
    \vspace{-4mm}
    \label{fig: experimental results}
\end{figure*}
\section{Conclusion} 
\label{sec:conclusion}
We have presented an algorithm for generating safe sets for complex environments by solving Poisson's equation. Our method uses environmental perception data to define a domain, on which we impose boundary conditions, and formulate a boundary value problem for Poisson's equation with a novel forcing function. The resulting solution is a safety function---a functional representation of the environment that characterizes safety and defines the safe set. This safety function is also the unique minimizer of a variational problem, and has desirable regularity properties ideal for safety-critical control design in robotic applications. The real-time efficacy of the proposed algorithm enables its use in dynamically changing environments. To validate this, we utilize the safety function to synthesize CBF-based safety filters, and demonstrate their effectiveness in achieving collision avoidance on quadruped and humanoid robots navigating in complex, dynamic environments. 
\section*{Acknowledgments}
\noindent
This work was supported by BP. The authors thank Ryan K. Cosner and Max H. Cohen for discussions and their insightful perspectives in the development and presentation of this work, and Lihzi Yang for the humanoid experimental setup.

\bibliographystyle{ieeetr}
\bibliography{mainbib, main-GB, cohen}

\begin{thebibliography}{10}

\bibitem{BansalCDC17}
S.~Bansal, M.~Chen, S.~Herbert, and C.~J. Tomlin, ``{Hamilton}-{Jacobi} reachability: {A} brief overview and recent advances,'' in {\em Proc. Conf. Decis. Control}, pp.~2242--2253, 2017.

\bibitem{TomlinTAC21}
M.~Chen, S.~L. Herbert, H.~Hu, Y.~Pu, J.~F. Fisac, S.~Bansal, S.~Han, and C.~J. Tomlin, ``Fastrack: A modular framework for real-time motion planning and guaranteed safe tracking,'' {\em IEEE Trans. Autom. Control}, vol.~66, no.~12, pp.~5861--5876, 2021.

\bibitem{KimAJTsafetyfilters}
K.~P. Wabersich, A.~J. Taylor, J.~J. Choi, K.~Sreenath, C.~J. Tomlin, A.~D. Ames, and M.~N. Zeilinger, ``Data-driven safety filters: Hamilton-jacobi reachability, control barrier functions, and predictive methods for uncertain systems,'' {\em IEEE Control Systems Magazine}, vol.~43, no.~5, pp.~137--177, 2023.

\bibitem{borrelli2017predictive}
F.~Borrelli, A.~Bemporad, and M.~Morari, {\em Predictive control for linear and hybrid systems}.
\newblock Cambridge University Press, 2017.

\bibitem{WabersichTAC23}
K.~P. Wabersich and M.~N. Zeilinger, ``Predictive control barrier functions: Enhanced safety mechanisms for learning-based control,'' {\em IEEE Transactions on Automatic Control}, vol.~68, no.~5, pp.~2638--2651, 2023.

\bibitem{khatib1986real}
O.~Khatib, ``Real-time obstacle avoidance for manipulators and mobile robots,'' {\em The international journal of robotics research}, vol.~5, no.~1, pp.~90--98, 1986.

\bibitem{singletary2021comparative}
A.~Singletary, K.~Klingebiel, J.~Bourne, A.~Browning, P.~Tokumaru, and A.~Ames, ``Comparative analysis of control barrier functions and artificial potential fields for obstacle avoidance,'' in {\em 2021 IEEE/RSJ International Conference on Intelligent Robots and Systems (IROS)}, pp.~8129--8136, IEEE, 2021.

\bibitem{AmesTAC17}
A.~D. Ames, X.~Xu, J.~W. Grizzle, and P.~Tabuada, ``Control barrier function based quadratic programs for safety critical systems,'' {\em IEEE Trans. Autom. Control}, vol.~62, no.~8, pp.~3861--3876, 2017.

\bibitem{paulnonsmooth}
P.~Glotfelter, J.~Cortés, and M.~Egerstedt, ``Nonsmooth barrier functions with applications to multi-robot systems,'' {\em IEEE Control Systems Letters}, vol.~1, no.~2, pp.~310--315, 2017.

\bibitem{molnar2023composing}
T.~G. Molnar and A.~D. Ames, ``Composing control barrier functions for complex safety specifications,'' {\em IEEE Control Systems Letters}, 2023.

\bibitem{SingletaryMolnarSafetyCriticalFood}
A.~Singletary, W.~Guffey, T.~G. Molnar, R.~Sinnet, and A.~D. Ames, ``Safety-critical manipulation for collision-free food preparation,'' {\em IEEE Robotics and Automation Letters}, vol.~7, no.~4, pp.~10954--10961, 2022.

\bibitem{oleynikova2017voxblox}
H.~Oleynikova, Z.~Taylor, M.~Fehr, R.~Siegwart, and J.~Nieto, ``Voxblox: Incremental 3d euclidean signed distance fields for on-board mav planning,'' in {\em IEEE/RSJ International Conference on Intelligent Robots and Systems (IROS)}, pp.~1366--1373, IEEE, 2017.

\bibitem{long2021learning}
K.~Long, C.~Qian, J.~Cort{\'e}s, and N.~Atanasov, ``Learning barrier functions with memory for robust safe navigation,'' {\em IEEE Robotics and Automation Letters}, vol.~6, no.~3, pp.~4931--4938, 2021.

\bibitem{gilbarg1977elliptic}
D.~Gilbarg and N.~S. Trudinger, {\em Elliptic partial differential equations of second order}, vol.~224.
\newblock Springer, 1977.

\bibitem{jost2012partial}
J.~Jost, {\em Partial Differential Equations}, vol.~214.
\newblock Springer Science \& Business Media, 3rd~ed., 2012.

\bibitem{connolly1990path}
C.~I. Connolly, J.~B. Burns, and R.~Weiss, ``Path planning using laplace's equation,'' in {\em Proceedings., IEEE International Conference on Robotics and Automation}, pp.~2102--2106, IEEE, 1990.

\bibitem{connolly1993applications}
C.~I. Connolly and R.~A. Grupen, ``The applications of harmonic functions to robotics,'' {\em Journal of robotic Systems}, vol.~10, no.~7, pp.~931--946, 1993.

\bibitem{kim1992real}
J.-O. Kim and P.~Khosla, ``Real-time obstacle avoidance using harmonic potential functions,'' {\em IEEE Transactions on Robotics and Automation}, vol.~8, no.~3, pp.~338--349, 1992.

\bibitem{Davidfluidpotential}
D.~Gingras, E.~Dupuis, G.~Payre, and J.~de~Lafontaine, ``Path planning based on fluid mechanics for mobile robots using unstructured terrain models,'' in {\em 2010 IEEE International Conference on Robotics and Automation}, pp.~1978--1984, 2010.

\bibitem{ADA-JWG-PT:14}
A.~D. Ames, J.~W. Grizzle, and P.~Tabuada, ``Control barrier function based quadratic programs with application to adaptive cruise control,'' in {\em Proc. Conf. Decis. Control}, (Los Angeles, CA), pp.~6271--6278, Dec. 2014.

\bibitem{gurriet2018towards}
T.~Gurriet, A.~Singletary, J.~Reher, L.~Ciarletta, E.~Feron, and A.~Ames, ``Towards a framework for realizable safety critical control through active set invariance,'' in {\em 2018 ACM/IEEE 9th International Conference on Cyber-Physical Systems (ICCPS)}, pp.~98--106, IEEE, 2018.

\bibitem{AmesECC19}
A.~D. Ames, S.~Coogan, M.~Egerstedt, G.~Notomista, K.~Sreenath, and P.~Tabuada, ``Control barrier functions: theory and applications,'' in {\em Proc. Eur. Control Conf.}, pp.~3420--3431, 2019.

\bibitem{TamasRAL22}
T.~G. Molnar, R.~K. Cosner, A.~W. Singletary, W.~Ubellacker, and A.~D. Ames, ``Model-free safety-critical control for robotic systems,'' {\em IEEE Robot. Aut. Lett.}, vol.~7, no.~2, pp.~944--951, 2022.

\bibitem{cohen2024constructive}
M.~H. Cohen, R.~K. Cosner, and A.~D. Ames, ``Constructive safety-critical control: Synthesizing control barrier functions for partially feedback linearizable systems,'' {\em IEEE Control Systems Letters}, 2024.

\bibitem{bahati2025control}
G.~Bahati, R.~K. Cosner, M.~H. Cohen, R.~M. Bena, and A.~D. Ames, ``Control barrier function synthesis for nonlinear systems with dual relative degree,'' {\em arXiv preprint arXiv:2504.00397}, 2025.

\bibitem{perez2023poisson}
P.~P\'{e}rez, M.~Gangnet, and A.~Blake, ``Poisson image editing,'' {\em ACM Trans. Graph.}, vol.~22, p.~313–318, July 2003.

\bibitem{perko2013differential}
L.~Perko, {\em Differential equations and dynamical systems}, vol.~7.
\newblock Springer Science \& Business Media, 2013.

\bibitem{blanchini1999set}
F.~Blanchini, ``Set invariance in control,'' {\em Automatica}, vol.~35, no.~11, pp.~1747--1767, 1999.

\bibitem{isidori1985nonlinear}
A.~Isidori, {\em Nonlinear control systems: an introduction}.
\newblock Springer, 1985.

\bibitem{AndrewCDC22}
A.~J. Taylor, P.~Ong, T.~G. Molnar, and A.~D. Ames, ``Safe backstepping with control barrier functions,'' in {\em Proc. Conf. Decis. Control}, pp.~5775--5782, 2022.

\bibitem{MurrayACC20}
L.~Doeser, P.~Nilsson, A.~D. Ames, and R.~M. Murray, ``Invariant sets for integrators and quadrotor obstacle avoidance,'' in {\em Proceedings of the American Control Conference}, pp.~3814--3821, 2020.

\bibitem{rosales2019generalizing}
L.~Rosales, ``Generalizing {H}opf’s boundary point {L}emma,'' {\em Canadian Mathematical Bulletin}, vol.~62, no.~1, pp.~183--197, 2019.

\bibitem{apushkinskaya2022around}
D.~E. Apushkinskaya and A.~I. Nazarov, ``Around the normal derivative lemma,'' {\em arXiv preprint arXiv:2206.08043}, 2022.

\bibitem{grisvard2011elliptic}
P.~Grisvard, {\em Elliptic problems in nonsmooth domains}.
\newblock SIAM, 2011.

\bibitem{borsuk2006elliptic}
M.~Borsuk and V.~Kondratiev, {\em Elliptic boundary value problems of second order in piecewise smooth domains}, vol.~69.
\newblock Elsevier, 2006.

\bibitem{krylov1996lectures}
N.~V. Krylov, ``Lectures on elliptic and parabolic equations in {H}\"older spaces,'' {\em American Mathematical Society}, 1996.

\bibitem{fiorenza2017holder}
R.~Fiorenza, {\em H{\"o}lder and locally H{\"o}lder continuous functions, and open sets of class \(C^k\), \(C^{k,\lambda}\)}.
\newblock Birkh{\"a}user, 2017.

\bibitem{evans2022partial}
L.~C. Evans, {\em Partial differential equations}.
\newblock American Mathematical Society, 1998.

\bibitem{marsden2003vector}
J.~E. Marsden and A.~Tromba, {\em Vector calculus}.
\newblock Macmillan, 2003.

\bibitem{kellogg2012foundations}
O.~D. Kellogg, {\em Foundations of potential theory}, vol.~31.
\newblock Springer Science \& Business Media, 2012.

\bibitem{CohenLCSS23}
M.~H. Cohen, P.~Ong, G.~Bahati, and A.~D. Ames, ``Characterizing smooth safety filters via the implicit function theorem,'' {\em IEEE Contr. Syst. Lett.}, vol.~7, pp.~3890--3895, 2023.

\bibitem{SontagUniversal}
E.~D. Sontag, ``{A Universal Construction of Artstein's Theorem on Nonlinear Stabilization},'' {\em Systems \& control letters}, vol.~13, no.~2, pp.~117--123, 1989.

\bibitem{taylor2022safe}
A.~J. Taylor, P.~Ong, T.~G. Molnar, and A.~D. Ames, ``Safe backstepping with control barrier functions,'' in {\em 2022 IEEE 61st Conference on Decision and Control (CDC)}, pp.~5775--5782, IEEE, 2022.

\bibitem{brezis2011functional}
H.~Br{\'e}zis, {\em Functional analysis, Sobolev spaces and partial differential equations}, vol.~2.
\newblock Springer, 2011.

\bibitem{strikwerda2004finite}
J.~C. Strikwerda, {\em Finite difference schemes and partial differential equations}.
\newblock SIAM, 2004.

\bibitem{belyaev2015variational}
A.~G. Belyaev and P.-A. Fayolle, ``On variational and pde-based distance function approximations,'' in {\em Computer Graphics Forum}, vol.~34, pp.~104--118, Wiley Online Library, 2015.

\bibitem{varadhan1967behavior}
S.~R.~S. Varadhan, ``On the behavior of the fundamental solution of the heat equation with variable coefficients,'' {\em Communications on Pure and Applied Mathematics}, vol.~20, no.~2, pp.~431--455, 1967.

\bibitem{CohenARC24}
M.~H. Cohen, T.~G. Molnar, and A.~D. Ames, ``Safety-critical control for autonomous systems: Control barrier functions via reduced order models,'' {\em Annual Reviews in Control}, vol.~57, p.~100947, 2024.

\bibitem{ravi2024sam2segmentimages}
N.~Ravi, V.~Gabeur, Y.-T. Hu, R.~Hu, C.~Ryali, T.~Ma, H.~Khedr, R.~Rädle, C.~Rolland, L.~Gustafson, E.~Mintun, J.~Pan, K.~V. Alwala, N.~Carion, C.-Y. Wu, R.~Girshick, P.~Dollár, and C.~Feichtenhofer, ``Sam 2: Segment anything in images and videos,'' 2024.

\bibitem{coron2007control}
J.-M. Coron, {\em Control and Nonlinearity}.
\newblock American Mathematical Society, 2007.

\bibitem{protter2012maximum}
M.~H. Protter and H.~F. Weinberger, {\em Maximum principles in differential equations}.
\newblock Springer Science \& Business Media, 2012.

\bibitem{fernandez2022regularity}
X.~Fern{\'a}ndez-Real and X.~Ros-Oton, {\em Regularity Theory for Elliptic PDE}.
\newblock Zurich Lectures in Advanced Mathematics, European Mathematical Society, 2022.

\bibitem{dacorogna2024introduction}
B.~Dacorogna, {\em Introduction to the Calculus of Variations}.
\newblock World Scientific, 2004.

\bibitem{caffarelli1995fully}
L.~A. Caffarelli and X.~Cabr{\'e}, {\em Fully nonlinear elliptic equations}, vol.~43.
\newblock American Mathematical Society, 1995.

\bibitem{dacorogna2007direct}
B.~Dacorogna, {\em Direct methods in the Calculus of Variations}, vol.~78.
\newblock Springer Science \& Business Media, 2007.

\bibitem{xiao2021high}
W.~Xiao and C.~Belta, ``High-order control barrier functions,'' {\em IEEE Transactions on Automatic Control}, vol.~67, no.~7, pp.~3655--3662, 2021.

\bibitem{XiaoWeakTAC}
X.~Tan, W.~S. Cortez, and D.~V. Dimarogonas, ``High-order barrier functions: Robustness, safety, and performance-critical control,'' {\em IEEE Transactions on Automatic Control}, vol.~67, no.~6, pp.~3021--3028, 2022.

\bibitem{ong2024rectified}
P.~Ong, M.~H. Cohen, T.~G. Molnar, and A.~D. Ames, ``Rectified control barrier functions for high-order safety constraints,'' {\em IEEE Control Systems Letters}, 2024.

\end{thebibliography}

\appendices
\newpage
\section{Poisson's equation}\label{appdx: poisson's equation}

This appendix reviews key concepts from second-order linear elliptic PDEs, emphasizing details relevant to our results. The notation and expositions follow \cite{gilbarg1977elliptic, jost2012partial, evans2022partial}.

\subsection{Dirichlet Problem for Poisson's equation}
Let $\Oc \subset \re^3$ be an open, bounded and connected set 
with smooth boundary $\pOc$, such that $\Occ = \Oc \cup \pOc$. Poisson's equation is a second-order linear elliptic PDE defined by:
\begin{align}\label{eq: poison bckgrd}
\Delta h(\by) = f(\by) \quad \text{ in } \Oc,
\end{align}
where $\by = (x,y,z) \in \re^3$, \( \Delta = \frac{\partial^2 }{\partial x^2} + \frac{\partial^2 }{\partial y^2} + \frac{\partial^2 }{\partial z^2}\) is the \textit{Laplacian} and $f: \Oc \rightarrow \re$ is a given \textit{forcing} function. If $f(\by)=0$ for all $\by \in \Oc$, \eqref{eq: poison bckgrd} is a homogeneous elliptic PDE called Laplace's equation.
To completely determine solutions to \eqref{eq: poison bckgrd}, boundary conditions must be specified to characterize the behavior of a solution on $\partial \Oc$ \cite{evans2022partial}. \textit{Dirichlet} boundary conditions specify the solution's value, while \textit{Neumann} boundary conditions specify the normal derivatives on $\partial \Oc$.
In this work, we consider solutions $h: \Occ \rightarrow \re$ satisfying Poisson's equation subject to Dirichlet boundary conditions:
\begin{gather}\label{eq: poisson's eq appndx}
\left \{
    \begin{aligned}
        \Delta h(\by) &= f(\by)& \text{ in } \Omega,\\
        h(\by) &= 0 &  \text{ on } \partial \Omega. \\
    \end{aligned}
    \right.
\end{gather}
A \textit{classical} solution is a twice continuously differentiable function $h \in C^2(\Occ)$ satisfying \eqref{eq: poisson's eq appndx}. 
The sign of the values of the forcing function $f(\cdot)$ in $\Oc$ leads to further characterizations of the solution to \eqref{eq: poisson's eq appndx}, provided in the following definition.
\begin{definition}
    Let $\Omega \subset \re^3$ be an open, bounded and connected set. A function $h \in C^2(\Oc)$ satisfying $\Delta h(\by) = 0 \ (\geq 0, \leq 0)$ in $\Oc$ is called \textit{harmonic} (\textit{subhamornic,} \textit{superharmonic}) in $\Oc$. 
\end{definition}
In the following subsections, we discuss the structural properties of solutions to the boundary value problem \eqref{eq: poisson's eq appndx} relevant to the developments presented in this paper.

\subsection{Gauss's Divergence Theorem}
A solution $h \in C^2(\Occ)$ to \eqref{eq: poisson's eq appndx} represents the density of some quantity across the domain $\Occ$. The \textit{flux} of the vector field $Dh$ is the net flow of $h$ across the boundary of the domain, and is given by the surface integral of the normal component of $Dh$ (\ie outward directional derivative of $h$) as follows:
\begin{align}
    \oiint_{\partial \Omega} D h(\by) \cdot \hat{\mathbf{n}}(\by) \, \mathrm{d} A,
\end{align}
 where  $\hat{\mathbf{n}} : \pOc \rightarrow \re^3$ is the outward-pointing unit normal on $\pOc$, $\mathrm{d} A$ is the surface element of $\partial \Oc$ and:
\begin{align}\label{eq: Hopf Lemmainequality1}
   D h(\by) \cdot \hat \bn(\by) = \lim_{\delta \rightarrow 0^+}\frac{h(\by + \delta \hat{\bn}(\by)) - h(\by)}{\delta},
\end{align}
for all $\by \in \pOc$. Given this property, we highlight a fundamental theorem associated with vector fields on closed domains.

\begin{theorem}(Gauss's Divergence Theorem \cite{marsden2003vector}) Let $\Oc \subset \re^3$ be an open, bounded and connected set with smooth boundary $\pOc$. Let $\hat{\mathbf{n}} : \pOc \rightarrow \re^3$ be an outward unit normal vector on $\pOc$ and suppose $h \in C^2(\Occ)$, ensuring the vector field $Dh \in C^1(\Occ;\re^3)$. Then:
\begin{align}
 \iiint_\Omega \Delta h(\by) \, \mathrm{d} V = \oiint_{\partial \Omega} D h(\by) \cdot \hat{\mathbf{n}}(\by) \, \mathrm{d} A,
\end{align} 
where $\Delta h = \mathrm{div}({Dh}) = \nabla  \cdot Dh $ is the divergence of $Dh$, $\mathrm{d} V \coloneq \dx\dy \mathrm{d}z$ denotes the volume element of $\Oc$ and $\mathrm{d} A$ is the surface element of $\partial \Oc$. 
\end{theorem}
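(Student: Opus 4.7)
The plan is to reduce the claim to the classical Gauss--Ostrogradsky divergence theorem applied to the specific vector field $\vec{F} \coloneqq Dh$. The hypothesis $h \in C^2(\Occ)$ ensures that each component $\partial_i h$ lies in $C^1(\Occ)$, so $\vec{F} \in C^1(\Occ;\re^3)$ and the $C^1$ requirement of the classical theorem is met. A direct componentwise computation then yields
\begin{align*}
\nabla \cdot Dh \;=\; \sum_{i \in \{1,2,3\}} \frac{\partial}{\partial x_i}\!\left(\frac{\partial h}{\partial x_i}\right) \;=\; \sum_i \frac{\partial^2 h}{\partial x_i^2} \;=\; \Delta h,
\end{align*}
so substituting $\vec{F} = Dh$ into the classical identity produces exactly the stated equality. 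The only real content is thus the classical divergence theorem itself.

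To prove the classical theorem I would proceed in three stages. First, establish it on an axis-aligned rectangular box $R \subset \re^3$: for each coordinate direction $i$, Fubini's theorem together with the one-dimensional fundamental theorem of calculus shows that the integral of $\partial_i F_i$ over $R$ equals the net flux of $F_i \be_i$ through the two faces of $\partial R$ whose outward normals are $\pm \be_i$, and summing over $i \in \{1,2,3\}$ reconstructs the full surface integral $\oiint_{\partial R} \vec{F}\cdot\hat{\bn}\, dA$. Second, because $\pOc$ is smooth, at each boundary point $\by_0 \in \pOc$ there is a neighborhood $U$ and a $C^1$ diffeomorphism $\Phi: U \to V$ that straightens the boundary, mapping $U \cap \Oc$ onto a half-ball and $U \cap \pOc$ onto a flat disk; a change of variables transports the identity on the curved piece into the already-verified flat case. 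Third, since $\Occ$ is compact, choose a finite open cover of $\Occ$ by interior sets (compactly contained in $\Oc$, so $\phi_\alpha \vec{F}$ has no boundary contribution) and boundary charts (handled by the flattening in stage two), take a smooth partition of unity $\{\phi_\alpha\}$ subordinate to this cover, apply the local identity to each $\phi_\alpha \vec{F}$, and sum; since $\sum_\alpha \phi_\alpha \equiv 1$ on $\Occ$, the global identity emerges.

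The main technical obstacle is the bookkeeping in the second stage. Under $\Phi$, the volume element transforms as $dV \mapsto |\det D\Phi^{-1}|\, dy$, while the surface element $dA$ on $\pOc$ and the outward unit normal $\hat{\bn}$ pull back to the flattened boundary with their own Jacobian factors (given by the Gram determinant of the tangential part of $D\Phi^{-1}$). One must verify that these factors combine consistently so that the pulled-back flux integral truly agrees with the flat half-ball version, and that the pulled-back divergence transforms correctly as well. This is a standard computation in differential-geometric integration and is the reason the theorem can be cited as a classical result from, e.g., \cite{marsden2003vector}. Once this bookkeeping is in hand, the three stages combine to give the classical divergence theorem, and the algebraic identity $\nabla \cdot Dh = \Delta h$ above completes the proof of the stated form.
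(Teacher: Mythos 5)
The paper does not actually prove this statement: it is quoted in the appendix as a classical background result with a citation to Marsden, and is only \emph{used} (in Section~\ref{sec: forcing func construction}) to relate the volume integral of the forcing function to the average boundary flux. So there is no in-paper argument to compare against line by line; what you have written is a from-scratch outline of the classical theorem. Your reduction is correct and essentially trivial: $\nabla\cdot Dh=\Delta h$ componentwise, and $h\in C^2(\Occ)$ gives $Dh\in C^1(\Occ;\re^3)$ with the derivatives extending continuously to $\pOc$ (by the paper's definition of $C^2(\Occ)$), which is exactly the regularity the Fubini/fundamental-theorem-of-calculus step needs on the closed region. Your three-stage skeleton (axis-aligned boxes, boundary charts, partition of unity) is the standard textbook proof and is sound in outline.

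The one place the sketch is glib is stage two. The divergence of a vector field is \emph{not} invariant under a general $C^1$ diffeomorphism: if you push $\vec{F}$ forward as a vector field, the divergence of the pushforward is not the pushforward of the divergence, so ``verifying that the pulled-back divergence transforms correctly'' is not mere bookkeeping---done naively the pointwise identity you would need is false. The two standard repairs are: (i) transport the flux $2$-form $\iota_{\vec{F}}\,\mathrm{d}V$ rather than the vector field itself, which is the Stokes-theorem formulation and makes both sides of the identity manifestly coordinate-free, with the Gram-determinant factors you mention cancelling automatically; or (ii) avoid the diffeomorphism entirely and prove the identity directly on regions that are subgraphs of $C^1$ functions over a coordinate plane, again by Fubini and the fundamental theorem of calculus in the graph direction---this is the ``elementary regions'' argument in the cited reference and is the more elementary route. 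Either repair closes the gap; as written, stage two is the only step that would not survive being expanded in full.
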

Gauss's divergence theorem states that the flux of a vector field through a closed surface equals the integral of the divergence of that vector field over the volume enclosed by the surface. This condition is particularly helpful in the designing forcing functions $f$, as demonstrated in Section~\ref{sec: forcing func construction}.

\subsection{Maximum and Minimum Principles}
Second derivatives provide information about the extremal values of a function. 
For Poisson's equation, the
\textit{strong} maximum and minimum principles state that a subhamornic (superharmonic) function $h$ attains its maximum (minimum) in $\Oc$ if and only if $h$ is constant, while \textit{weak} maximum and minimum principles further state that bounded subhamornic (superharmonic) functions attain their extremal values on the boundary of $\Oc$, as we highlight in the following theorem.

\begin{theorem}(Weak Maximum and Minimum Principle \cite{protter2012maximum})\label{thm: weak maximum} 
Let $\Omega \subset \re^3$ be an open, bounded, connected set.
If a function $h \in C^2(\Occ)$ satisfies $\Delta h(\by) \geq 0 \ (\leq 0)$ in $\Oc$, then the maximum (minimum) value of $h$ is on the boundary $\pOc$. That is:
\begin{align}
    \Delta h(\by) \geq 0 \text{ in } \Oc \implies \max_{\by \in \Occ} h(\by) = \max_{\by \in \pOc} h(\by), \\
    \Delta h(\by) \leq 0 \text{ in } \Oc \implies \min_{\by \in \Occ} h(\by) = \min_{\by \in \pOc} h(\by). 
\end{align}

\end{theorem}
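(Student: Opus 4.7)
The plan is to prove the maximum statement and obtain the minimum statement by applying it to $-h$. The classical two-step strategy applies: first establish a strict version of the principle under the stronger hypothesis $\Delta h > 0$, and then upgrade to the non-strict hypothesis $\Delta h \geq 0$ via a quadratic perturbation, exploiting the boundedness of $\Oc$. Note that connectedness is not actually needed for the weak principle; it enters only in the sharper strong principle.

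First I would dispatch the strict case. Assume $\Delta h(\by) > 0$ throughout $\Oc$, and suppose for contradiction that $h$ attains its maximum over $\Occ$ at some interior point $\by^\star \in \Oc$. Because $h \in C^2$ and $\by^\star$ is an interior local maximum, the Hessian $D^2 h(\by^\star)$ is negative semidefinite, so its trace is nonpositive. But this trace equals exactly $\Delta h(\by^\star)$, yielding $\Delta h(\by^\star) \leq 0$ and contradicting strict positivity. Hence $\max_{\by \in \Occ} h(\by)$ must be attained at some point of $\pOc$.

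Next, for the general case $\Delta h \geq 0$, I would introduce the perturbation $h_\epsilon(\by) \coloneqq h(\by) + \epsilon \|\by\|^2$ for $\epsilon > 0$. A direct computation gives $\Delta h_\epsilon(\by) = \Delta h(\by) + 6\epsilon \geq 6\epsilon > 0$ in $\Oc$, so the strict case yields $\max_{\Occ} h_\epsilon = \max_{\pOc} h_\epsilon$. Since $\Oc$ is bounded, there exists $M < \infty$ with $\|\by\|^2 \leq M$ for all $\by \in \Occ$, so for every $\by \in \Occ$,
\[
h(\by) \leq h_\epsilon(\by) \leq \max_{\pOc} h_\epsilon \leq \max_{\pOc} h + \epsilon M.
\]
Letting $\epsilon \to 0^+$ gives $\max_{\Occ} h \leq \max_{\pOc} h$, and the reverse inequality is immediate since $\pOc \subset \Occ$. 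The minimum statement then follows by applying the maximum statement to $-h$, observing that $\Delta(-h) = -\Delta h \geq 0$ whenever $\Delta h \leq 0$.

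The only nonroutine point, and the real reason the proof must split into two cases, is that the weak inequality $\Delta h \geq 0$ by itself does not preclude an interior maximum: at a degenerate critical point one could have $\Delta h(\by^\star) = 0$ with $D^2 h(\by^\star)$ negative semidefinite, so the second-derivative test yields no contradiction. The quadratic perturbation precisely sidesteps this by converting weak subharmonicity into strict subharmonicity at an $O(\epsilon)$ cost, which is harmlessly absorbed because $\Oc$ is bounded. No further technicalities arise — the argument does not require smoothness of $\pOc$, only that $\Occ$ be compact so the extrema are attained.
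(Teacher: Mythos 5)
Your proof is correct, and it takes a different route from the one the paper indicates. The paper states Theorem~\ref{thm: weak maximum} as a cited classical result and remarks only that it ``follows from the mean value property'': the sub-mean-value inequality for subharmonic functions shows that an interior maximum forces $h$ to be locally constant, and connectedness then propagates this to all of $\Oc$ --- an argument that actually establishes the \emph{strong} maximum principle, of which the weak one is a corollary. You instead give the elementary calculus proof: rule out an interior maximum in the strict case $\Delta h > 0$ via the second-derivative test (negative semidefinite Hessian has nonpositive trace), then pass to $\Delta h \geq 0$ with the perturbation $h + \epsilon\|\by\|^2$, whose Laplacian gains $6\epsilon$ in $\re^3$, and absorb the $O(\epsilon)$ error using boundedness of $\Oc$. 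Your observation that connectedness is dispensable for the weak principle is accurate and is precisely what distinguishes the two routes: the mean-value argument needs connectedness because it proves the stronger rigidity statement, whereas your perturbation argument needs only compactness of $\Occ$ and continuity of $h$ up to the boundary. Both approaches are standard and both suffice for the way the theorem is used in the paper (namely, to conclude $h \geq 0$ on $\Occ$ from superharmonicity and zero boundary data); your version is the more self-contained of the two.
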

The above theorem follows from the mean value property \cite{evans2022partial}, which states that the value of a function at any point in $\Oc$ is equal to the average over a ball 
centered around that point. 
A key result from Theorem~\ref{thm: weak maximum} is Hopf's lemma, which gives a  statement about the sign of the normal derivatives of $h$ on $\partial \Oc$. To facilitate this, we introduce a key property that a domain must satisfy.

\begin{definition}(Interior Sphere Condition \cite{evans2022partial})\label{def: interior sphere property}
An open set $\Omega$ satisfies the \emph{interior sphere condition} at $\by_0 \in \partial \Omega$, if there exists an open ball $B_{\rho}(\by) \subset \Omega$ of radius $\rho >0$ and $\by \in \Omega$ such that 
$ \partial B_{\rho}(\by) \cap \partial \Omega = \{\by_0\}$.
\end{definition} 

The above definition states every point on $\partial \Omega$ can be touched from inside with a closed ball contained in $\overline{\Omega}$ as shown in Figure~\ref{fig: interior sphere}. Notice, if the above condition holds for all points on $\pOc$, then $\Oc$ cannot have outward-pointing ``corners", though it may have inward-pointing ones.
In this work, we assume smooth boundaries, which guarantees the above condition holds and that the gradient $Dh$ exists at each point on $\partial \Oc$ as in \eqref{eq: Hopf Lemmainequality1}. However, this analysis can be readily extended to nonsmooth boundaries under appropriate conditions \cite{grisvard2011elliptic, borsuk2006elliptic}.
Following from this, we have the following lemma for the directional derivatives on $\pOc$.
\begin{lemma}(Hopf's Lemma \cite{protter2012maximum})\label{lemma: Hopf's} Consider an open, bounded and connected set  \( \Omega \subset \mathbb{R}^3 \) 
with smooth boundary $\pOc$,
and let \( h \in C^2(\Occ)\) such that $\Delta h \geq 0 \ (\leq 0)$ in $\Omega$.   
If \( \by_0 \in \partial \Omega \) is an extremal point of $h$ as in Theorem~\ref{thm: weak maximum}, then we have:
\begin{align}\label{eq: hopf's inequalities}
  h(\by_0) = \max_{\by \in \Occ} h(\by) \implies D h(\by_0) \cdot \hat \bn(\by_0) \geq 0,\\ h(\by_0) = \min_{\by \in \Occ} h(\by) \implies D h(\by_0) \cdot \hat \bn(\by_0) \leq 0,
\end{align}
where \( \mathbf{\hat{n}}(\by_0) \) is the outward-pointing unit normal to $\partial \Omega$ at \( \by_0 \). 
\end{lemma}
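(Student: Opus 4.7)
By replacing $h$ with $-h$ if necessary, it suffices to prove the subharmonic case: if $\Delta h \geq 0$ in $\Omega$ and $h(\mathbf{y}_0) = M := \max_{\overline{\Omega}} h$, then $Dh(\mathbf{y}_0) \cdot \hat{\mathbf{n}}(\mathbf{y}_0) \geq 0$. If $h$ is constant, $Dh \equiv 0$ and the inequality is trivial, so I focus on the nonconstant case, in which the strong maximum principle gives $h(\mathbf{y}) < M$ for all $\mathbf{y} \in \Omega$. The approach is the classical barrier argument leveraging the interior sphere condition, which holds here by smoothness of $\partial\Omega$.

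First, I would choose an open ball $B_\rho(\mathbf{y}_c) \subset \Omega$ with $\overline{B_\rho(\mathbf{y}_c)} \cap \partial\Omega = \{\mathbf{y}_0\}$ as in Def.~\ref{def: interior sphere property}; smoothness of $\partial\Omega$ then gives $\hat{\mathbf{n}}(\mathbf{y}_0) = (\mathbf{y}_0 - \mathbf{y}_c)/\rho$. I would then introduce the auxiliary barrier $w(\mathbf{y}) = e^{-\alpha|\mathbf{y}-\mathbf{y}_c|^2} - e^{-\alpha\rho^2}$ on the annulus $A := B_\rho(\mathbf{y}_c) \setminus \overline{B_{\rho/2}(\mathbf{y}_c)}$. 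A direct computation yields $\Delta w(\mathbf{y}) = (4\alpha^2|\mathbf{y}-\mathbf{y}_c|^2 - 6\alpha)\, e^{-\alpha|\mathbf{y}-\mathbf{y}_c|^2}$, which is strictly positive throughout $A$ once $\alpha > 6/\rho^2$. Moreover $w > 0$ on the inner sphere $\partial B_{\rho/2}(\mathbf{y}_c)$, $w = 0$ on the outer sphere $\partial B_\rho(\mathbf{y}_c)$, and $w(\mathbf{y}_0) = 0$.

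Next, I would apply Theorem~\ref{thm: weak maximum} to $v(\mathbf{y}) := h(\mathbf{y}) + \epsilon w(\mathbf{y}) - M$ on $\overline{A}$. On the inner sphere, which lies compactly in $\Omega$, the strict inequality $h < M$ combined with continuity yields a positive gap $M - \sup_{\partial B_{\rho/2}} h > 0$, so $v \leq 0$ there for $\epsilon > 0$ small enough. On the outer sphere, $w = 0$ and $h \leq M$ force $v \leq 0$ immediately. Since $\Delta v = \Delta h + \epsilon \Delta w > 0$ throughout $A$, the weak maximum principle forbids an interior max and yields $v \leq 0$ on all of $\overline{A}$, with equality attained at $\mathbf{y}_0$.

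Finally, since $v \leq 0$ on $A$ and $v(\mathbf{y}_0) = 0$, evaluating the inward difference quotient in the sense of \eqref{eq: Hopf Lemmainequality1} gives $Dv(\mathbf{y}_0) \cdot \hat{\mathbf{n}}(\mathbf{y}_0) \geq 0$. A direct computation yields $Dw(\mathbf{y}_0) \cdot \hat{\mathbf{n}}(\mathbf{y}_0) = -2\alpha\rho\, e^{-\alpha\rho^2} < 0$, hence $Dh(\mathbf{y}_0) \cdot \hat{\mathbf{n}}(\mathbf{y}_0) \geq -\epsilon\, Dw(\mathbf{y}_0) \cdot \hat{\mathbf{n}}(\mathbf{y}_0) > 0$, recovering (indeed strengthening) the claim. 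The main obstacle is calibrating the pair $(\alpha, \epsilon)$ so that both $\Delta w > 0$ on $A$ and $v \leq 0$ on $\partial B_{\rho/2}$ hold simultaneously; separating the constant and nonconstant cases via the strong maximum principle is what makes such a choice possible.
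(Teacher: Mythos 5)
Your proof is correct. The paper does not prove this lemma---it is quoted from Protter--Weinberger \cite{protter2012maximum}---and your barrier argument (interior tangent ball, the auxiliary function $e^{-\alpha|\by-\by_c|^2}-e^{-\alpha\rho^2}$ on an annulus, calibration of $\alpha$ and $\epsilon$, then the weak maximum principle and a one-sided difference quotient) is exactly the standard textbook proof; the computation $\Delta w = (4\alpha^2 r^2 - 6\alpha)e^{-\alpha r^2}$ and the threshold $\alpha > 6/\rho^2$ are right for $n=3$. Two small remarks: the weak inequality as literally stated in the lemma already follows from the one-sided difference quotient at a boundary maximum without any barrier, so the substance of your argument is the strict inequality for nonconstant $h$---which is precisely the version the paper actually invokes in \eqref{eq: Hopf Lemmainequality}; and your appeal to the strong maximum principle to separate the constant case is legitimate but rests on a result the paper only mentions informally rather than states as a theorem.
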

The maximum and minimum principles can be used to establish the uniqueness of a classical solution $h \in C^2(\Occ)$ to \eqref{eq: poisson's eq appndx}, assuming the solution exists \cite[Theorem 2.4]{gilbarg1977elliptic}. However, these principles do not address the existence of a classical solution. We devote the next subsection to this discussion.
\begin{figure}[t!] 
    \centering    \includegraphics[width=.87\linewidth]{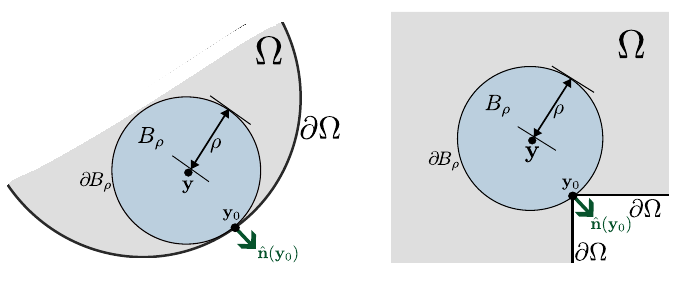}
    \vspace{-3mm}
    \caption{Interior sphere condition for \textbf{[left] }smooth boundary and \textbf{[right]} Lipschitz boundary.}
    \label{fig: interior sphere}
    \vspace{-7mm}
\end{figure}
\subsection{Existence and Regularity of Classical Solutions}
The existence of classical solutions to \eqref{eq: poisson's eq appndx} relies on the properties of the forcing function $f$.
Naturally, one would expect the solution $h$ to \eqref{eq: poisson's eq appndx} to be ``twice more differentiable" than $f$. However, this is not always true (except for the one-dimensional case), and often depends on the function space specifying the regularity of $f$ \cite{gilbarg1977elliptic,jost2012partial}. 
A sufficient condition for \eqref{eq: poisson's eq appndx} to admit a classical solution on $\overline \Oc$ is that $f$ is Hölder continuous, presented in the following definition.
\begin{definition}(Hölder Continuity~\cite{fiorenza2017holder})
A function $f \in C^{0,\alpha}(\Omega)$ is said to be Hölder continuous on $\Oc$ with order $\alpha$ if for some $0 < \alpha < 1$, there exists $M >0$ such that:
\[|f(\by_1) - f(\by_2)| \leq M \|\by_1 - \by_2\|^\alpha \quad \forall \by_1, \by_2 \in \Omega.\]
We call \( C^{0,\alpha}(\Omega) \) the space of Hölder continuous functions of order $\alpha$, with $\alpha = 1$ denoting Lipschitz continuous functions. 
\end{definition}

Specifically, \( C^{k,\alpha}(\Oc) \) is the space of \( C^{k}(\Oc) \) functions whose partial derivatives of order $k \in \mathbb{N}_0$ belong to  \( C^{0,\alpha}(\Oc)\). That is, $\alpha \in (0, 1)$ describes the regularity of the $k$-th partial derivatives of $f$. We further have that $C^{k,0}(\Oc) = C^k(\Oc)$. Furthermore, Hölder continuous functions are uniformly continuous, thus, $f \in C^{k,\alpha}(\Occ)$ denotes a function $f \in C^{k,\alpha}(\Oc)$ whose partial derivatives up to order $k$ admit continuous extensions
to $\Occ$. 
The following theorem guarantees that all second derivatives of $h$ lie in the same function space as $f$.

\begin{theorem}(Classical Elliptic Regularity \cite{gilbarg1977elliptic})\label{thm:regularity}
    Let $\Omega \subset \re^3$ be an open, bounded and connected set with smooth boundary $\partial \Oc$. Suppose $f \in C^{k,\alpha}(\overline \Omega)$ for some $ 0 < \alpha < 1$ and $k \in \mathbb{N}_0$. Then there exists a unique solution $h \in C^{2+k, \alpha}(\overline \Oc)$ to \eqref{eq: poisson's eq appndx}. Furthermore, $f \in C^{\infty}(\overline \Omega) \implies h \in C^{\infty}(\overline \Omega)$.
\end{theorem}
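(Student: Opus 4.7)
The plan is to combine (i) uniqueness via the maximum principle, (ii) existence via a Newtonian potential corrected by a harmonic extension, (iii) Schauder-type estimates for $C^{2,\alpha}$ regularity up to $\pOc$, (iv) a bootstrap in $k$ for higher-order Hölder regularity, and (v) a direct corollary for the smooth case. Uniqueness is immediate: if $h_1, h_2 \in C^{2+k,\alpha}(\Occ)$ both solve \eqref{eq: poisson's eq appndx}, then $w = h_1 - h_2$ is harmonic in $\Oc$ with $w = 0$ on $\pOc$, so the weak maximum and minimum principles of Theorem~\ref{thm: weak maximum} force $w \equiv 0$.

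For existence in the base case $k=0$, I would extend $f$ to a compactly supported $\tilde f \in C^{0,\alpha}(\R^3)$ via a Stein-type extension operator (available because $\pOc$ is smooth) and construct the Newtonian potential
\[
w(\by) = \int_{\R^3} \Gamma(\by - \bz)\, \tilde f(\bz)\, \mathrm{d} V, \qquad \Gamma(\by) = -\frac{1}{4\pi \|\by\|},
\]
where $\Gamma$ is the fundamental solution of $\Delta$ on $\R^3$. Classical potential estimates (Gilbarg--Trudinger, Lemma 4.2) give $w \in C^{2,\alpha}(\Occ)$ and $\Delta w = f$ in $\Oc$. I would then solve $\Delta v = 0$ in $\Oc$ with $v = -w|_{\pOc}$ via Perron's method; smoothness of $\pOc$ makes every boundary point regular, so $v \in C^{2,\alpha}(\Occ)$ and $h := w + v$ is the desired solution.

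Next I would promote existence to the announced regularity. Interior Schauder estimates give $\|h\|_{C^{2,\alpha}(\Oc')} \leq C(\|h\|_{L^\infty(\Oc)} + \|f\|_{C^{0,\alpha}(\Oc)})$ on any $\Oc' \subset\subset \Oc$. To extend this up to $\pOc$, I would locally flatten the boundary by smooth diffeomorphisms, transform \eqref{eq: poisson's eq appndx} into a uniformly elliptic problem on a half-ball with zero Dirichlet data, and apply the boundary Schauder estimate; covering $\pOc$ by finitely many such charts yields $h \in C^{2,\alpha}(\Occ)$. For general $k$, I would induct on $k$: assuming the theorem for $k-1$ and given $f \in C^{k,\alpha}(\Occ)$, differentiate the equation tangentially in a flattened patch so that $u = \partial_\tau h$ satisfies $\Delta u = \partial_\tau f \in C^{k-1,\alpha}$ with Dirichlet data inherited from tangential differentiation of $h|_{\pOc}=0$. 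The induction hypothesis yields $u \in C^{1+k,\alpha}$, and normal derivatives are recovered algebraically from $\partial_{\bn}^2 h = f - \Delta_\tau h$ in boundary-adapted coordinates. Assembling tangential and normal regularity gives $h \in C^{2+k,\alpha}(\Occ)$. The smooth case follows immediately: $f \in C^\infty(\Occ)$ means $f \in C^{k,\alpha}(\Occ)$ for every $k$, hence $h \in \bigcap_{k \geq 0} C^{2+k,\alpha}(\Occ) = C^\infty(\Occ)$.

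The main obstacle is the boundary regularity step. Interior Schauder estimates are classical, but pushing the $C^{2+k,\alpha}$ conclusion up to $\pOc$ requires the boundary-flattening diffeomorphism together with careful control of the transformed (now variable-coefficient) equation so that the Hölder class of coefficients and data is preserved — precisely where smoothness of $\pOc$ is essential. Dropping the boundary to merely Lipschitz already fails this step and forfeits the classical $C^{2,\alpha}$ conclusion even for smooth $f$, motivating the smooth-boundary hypothesis adopted in the paper.
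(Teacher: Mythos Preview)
The paper does not actually prove Theorem~\ref{thm:regularity}; it cites it as a classical result and points to specific references: Kellogg's theorem \cite[Theorem~6.14]{gilbarg1977elliptic} for existence, \cite[Theorem~6.17]{gilbarg1977elliptic} for interior higher-order regularity, and \cite[Theorem~6.19]{gilbarg1977elliptic} for global regularity up to the boundary. Your sketch is a faithful outline of exactly this Schauder-theory route from Gilbarg--Trudinger (Newtonian potential plus harmonic correction for existence, maximum principle for uniqueness, boundary flattening and Schauder estimates for regularity, induction on $k$), so your approach is correct and aligned with the references the paper invokes.
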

The interior regularity of $h$ on $\Oc$ is determined by the regularity of $f$, and global regularity on $\overline{\Oc}$ follows by extending the interior regularity to the boundary, given that the boundary is (sufficiently) smooth as we assumed. Schauder theory \cite[Chap.~6]{gilbarg1977elliptic} provides classical regularity results for general linear elliptic PDEs in Hölder spaces. In the above theorem, the existence of a classical solution to \eqref{eq: poisson's eq appndx} with a Hölder continuous $f$ is guaranteed by Kellogg's Theorem (see \cite[Theorem 6.14]{gilbarg1977elliptic}) and \cite[Theorem 13.3.1]{jost2012partial}. For interior higher-order regularity, refer to 
\cite[Theorem 6.17]{gilbarg1977elliptic}, while for global higher-order regularity, see \cite[Theorem 6.19]{gilbarg1977elliptic}. 
\begin{remark} The above theorem does not hold for $\alpha =0$, and  $\alpha =1$, that is, $h$ is not guaranteed to be twice differentiable if $f$ is merely continuous or even Lipschitz continuous, and may require further technical assumptions and developments to establish this. For an introduction to elliptic regularity theory, refer to \cite{fernandez2022regularity}; for elliptic equations in Hölder spaces, see \cite{krylov1996lectures}.
\end{remark}
\subsection{Optimality}
 A solution to Poisson's equation can also be recognized as the minimizer of an appropriate functional in the appropriate function space. By recognizing that the solution to a PDE is the critical point of an associated cost functional, proving the existence, uniqueness (e.g. via convexity arguments), and regularity of a minimizing solution can, in some cases, be easier than working directly with the PDE. We have the following theorem for classical solutions to Poisson's equation.

\begin{theorem}\label{thm: dirichlet principle}(Dirichlet's Principle \cite{evans2022partial,dacorogna2024introduction})
Let $\Oc$ be an open, bounded and connected set with smooth boundary $\pOc$. Suppose $h \in C^2(\Occ)$ is the solution to the Dirichlet problem for Poisson's equation \eqref{eq: poisson's eq appndx}. Then $h$ is the unique minimizer of the variational problem:
\begin{gather}\label{eq: variation main}
   \min_{h \in \Hc} \left \{J[h] =  \iiint_{\Oc} \ \frac{1}{2}|D h(\by)|^2 + h(\by)f(\by)  \ \mathrm{d}V \right \},
\end{gather}
in the set of admissible functions:
\begin{gather}\label{eq: admissible set main}
\Hc = \{ h \in C^2(\Occ) \, \big | \, h(\by) = 0 \text{ on } \partial \Oc\}.
\end{gather}
 Conversely, if $h \in \Hc$ satisfies \eqref{eq: variation main}, then it solves the boundary value problem \eqref{eq: poisson's eq appndx}. 
\end{theorem}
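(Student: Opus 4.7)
The plan is to prove both implications by the standard energy-comparison argument built on Green's first identity, which is available here because $h \in C^2(\Occ)$ and $\pOc$ is smooth.

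For the forward direction, suppose $h$ solves \eqref{eq: poisson's eq appndx}. I fix an arbitrary admissible $v \in \Hc$ and write $v = h + w$, where $w = v - h \in C^2(\Occ)$ satisfies $w = 0$ on $\pOc$. Expanding $|Dv|^2 = |Dh|^2 + 2 Dh \cdot Dw + |Dw|^2$, the functional splits as
\begin{align*}
J[v] = J[h] + \iiint_{\Oc}\bigl(Dh \cdot Dw + w f\bigr)\mathrm{d}V + \tfrac{1}{2}\iiint_{\Oc}|Dw|^2\,\mathrm{d}V.
\end{align*}
Green's first identity together with $w|_{\pOc} = 0$ gives $\iiint_{\Oc} Dh \cdot Dw \, \mathrm{d}V = -\iiint_{\Oc} w\,\Delta h\,\mathrm{d}V$, so the cross term becomes $\iiint_{\Oc} w(f - \Delta h)\,\mathrm{d}V$, which vanishes because $\Delta h = f$. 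Therefore $J[v] = J[h] + \tfrac{1}{2}\iiint_{\Oc}|Dw|^2\,\mathrm{d}V \geq J[h]$, proving that $h$ is a minimizer. Equality forces $Dw \equiv 0$ on the connected set $\Oc$, hence $w$ is constant, and the boundary condition $w = 0$ on $\pOc$ then yields $w \equiv 0$; this delivers uniqueness of the minimizer simultaneously.

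For the converse direction, assume $h \in \Hc$ minimizes $J$. For any test function $\phi \in C^2(\Occ)$ with $\phi = 0$ on $\pOc$, the perturbation $h + \varepsilon \phi$ is admissible for every $\varepsilon \in \re$, and $\varepsilon \mapsto J[h + \varepsilon\phi]$ is a quadratic polynomial in $\varepsilon$ attaining its minimum at $\varepsilon = 0$. Setting its derivative to zero yields the first-order condition
\begin{align*}
\iiint_{\Oc}\bigl(Dh \cdot D\phi + \phi f\bigr)\mathrm{d}V = 0.
\end{align*}
Another application of Green's first identity, again exploiting $\phi|_{\pOc} = 0$, rewrites this as $\iiint_{\Oc} \phi\,(f - \Delta h)\,\mathrm{d}V = 0$ for all such $\phi$. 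Restricting to $\phi \in C^\infty_c(\Oc) \subset \Hc$ and invoking the fundamental lemma of the calculus of variations (valid since $f - \Delta h$ is continuous on $\Oc$), I conclude $\Delta h = f$ pointwise in $\Oc$, while $h = 0$ on $\pOc$ is built into membership of $\Hc$.

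The only real subtlety will be confirming that Green's identity applies cleanly in both directions and that enough admissible perturbations exist for the converse argument; both points are handled by the assumed $C^2(\Occ)$ regularity and the fact that $C^\infty_c(\Oc) \subset \Hc$. Everything else is routine algebraic manipulation of the quadratic form $J[\cdot]$.
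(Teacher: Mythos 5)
Your proof is correct and complete; it is the classical energy-comparison proof of Dirichlet's principle (the one in Evans, \S 2.2.5), and both directions are handled properly: Green's first identity applies because $h, v \in C^2(\Occ)$ and $\pOc$ is smooth, the cross term vanishes exactly when $\Delta h = f$, and the equality case $Dw \equiv 0$ on the connected set $\Oc$ together with $w|_{\pOc}=0$ gives uniqueness. The paper itself does not write out this argument: it cites \cite{evans2022partial,dacorogna2024introduction} and sketches only the converse direction (a $C^2$ minimizer must satisfy the Euler--Lagrange equation, which here is Poisson's equation) and then infers uniqueness of the minimizer from uniqueness of the PDE solution via the maximum principle. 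Your route buys something the paper's sketch leaves implicit: the identity $J[v] = J[h] + \tfrac{1}{2}\iiint_{\Oc}|Dw|^2\,\mathrm{d}V$ directly establishes that the PDE solution actually \emph{attains} the minimum over $\Hc$ --- precisely the attainment issue the paper itself flags as ``the main challenge of Dirichlet's principle'' --- whereas the Euler--Lagrange argument alone only shows that \emph{if} a smooth minimizer exists it solves the PDE. The one point worth stating explicitly in your converse step is that $f$ is continuous on $\Oc$ (it equals $\Delta h$ for the $C^2$ solution in the forward direction, and is Hölder continuous throughout the paper), so that the fundamental lemma of the calculus of variations applies to $f - \Delta h$; you note this parenthetically, which suffices.
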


The above theorem states that $h \in \Hc$ being the solution to the PDE \eqref{eq: poisson's eq appndx} is equivalent to the statement that $h$ minimizes $J[\cdot]$. This follows from the fact that a necessary condition for optimality of a $C^2(\overline{\Oc})$ minimizer of $J[\cdot]$ is that it satisfies the Euler-Lagrange equation \cite{dacorogna2024introduction}, which, in the case of \eqref{eq: variation main}, corresponds Poisson's equation \eqref{eq: poisson's eq appndx}. The uniqueness of a solution to \eqref{eq: poisson's eq appndx} certifies it as the unique minimizer of $J[\cdot]$. 

The main challenge of Dirichlet's principle is that it is not obvious whether \eqref{eq: variation main} attains its minimum in the chosen class of functions in the set \eqref{eq: admissible set main}. The above theorem is an ``inverse" optimal statement as we already assumed the knowledge of a $C^2(\Occ)$ solution to Poisson's equation \eqref{eq: poisson's eq appndx}, which, as earlier mentioned, relies on the regularity properties of $f$. Therefore, for forcing functions $f$ that are not Hölder continuous, it is useful to consider solutions in less regular function spaces, which we briefly discuss next.

\subsection{Weak Solutions}

For less regular $f$, which may be discontinuous or lack continuous derivatives, solutions $h$ to \eqref{eq: poisson's eq appndx} need not be continuous or even 
continuously differentiable. Moreover, minimizers of \eqref{eq: variation main} may not exist in \eqref{eq: admissible set main}. In such cases, one would prove the  existence of a \textit{generalized} or \textit{weak} solution to allow for wider class of candidate solutions \cite{evans2022partial, brezis2011functional}. Typically, weak solutions satisfy a criterion---often based on integration by parts---that is necessary and sufficient for a differentiable function to solve a PDE, but does not require differentiability.  Thus, every classical solution is also a weak solution. A common condition for less regular $f$ that yields weak solutions is integrability, where $f \in L^p(\Oc)$ for $ p \in (1,\infty)$. For further details on $L^p$ regularity theory for elliptic equations, see \cite{ gilbarg1977elliptic, jost2012partial, fernandez2022regularity, caffarelli1995fully}. 

To establish existence of a classical solution, it is often easier to first prove the existence of a weak solution and then prove sufficient regularity such that this weak solution is also a classical solution. Consequently, in variational methods \cite{dacorogna2024introduction,dacorogna2007direct}, a minimizer $h$ of \eqref{eq: variation main} has to
satisfy the corresponding Euler–Lagrange equation in its \textit{weak form}\footnote{Weak forms (and weak solutions) are employed in Finite Element Method (FEM) algorithms \cite{jost2012partial} to obtain approximate solutions to \eqref{eq: poisson's eq appndx}.}, and if $h$
is sufficiently regular, also in the classical sense \eqref{eq: poisson's eq appndx}.  

In this paper, we focus on classical solutions to  \eqref{eq: poisson's eq appndx} as they provide continuous derivatives which are convenient for control design in robotic applications, and present our results for Hölder continuous forcing functions $f$. However, our results can be generalized to less regular $f$ that yield weak solutions, provided that a proof of regularity---to obtain classical solutions---can be established.

\subsection{ Equivalence of Variational Problems}
We provide an equivalent formulation of the variational problem associated with \eqref{eq: variation cost} to show that a $C^2(\Occ)$ minimizer of $I[\cdot]$ is also the minimizer of a functional with the structure $J[\cdot]$ in Theorem~\ref{thm: dirichlet principle}.
In particular, we have the following:
\begin{align}\label{eq: alternate cost}
    I[h] &=  \iiint_{\Oc} \ \frac{1}{2}|D h  - \vec{\bv} |^2  \ \mathrm{d}V  \\
    &=  \iiint_{\Oc} \ \frac{1}{2}|D h |^2  - D h  \cdot 
 \vec{\bv}  + \frac{1}{2}|\vec{\bv} |^2  \ \mathrm{d}V  \\
   &=\iiint_{\Oc} \ \frac{1}{2}|D h |^2  + h\nabla \cdot \vec{\bv}  + \frac{1}{2}|\vec{\bv} |^2  \ \mathrm{d}V \label{eq: alternate cost last line}
\end{align}
where we used the product rule $\nabla \cdot (h\vec{\bv}) = D h \cdot \vec{\bv} + h\nabla \cdot \vec{\bv}$ and Gauss's divergence theorem for the cross-terms to yield:
\begin{align}
    \iiint_{\Oc} D h \cdot \vec{\bv}  \   \mathrm{d}V
    &= \iiint_{\Oc}  \nabla   \cdot  (h\vec{\bv}) - h\nabla \cdot \vec{\bv} \mathrm{d}V
    \\
   &= \oiint_{\partial \Oc} h\vec{\bv} \cdot \hat{\bn} \ \mathrm{d}A - \iiint_{\Oc}h\nabla \cdot \vec{\bv} \ \mathrm{d}V \\
   &= - \iiint_{\Oc}h\nabla \cdot \vec{\bv} \ \mathrm{d}V. 
\end{align}
since $h(\by)=0$ on $\pOc$. 
Computing the Euler-Lagrange equation associated with \eqref{eq: alternate cost last line} yields \eqref{eq: poisson's eq div}, thus verifying that a $C^2(\Occ)$ minimizer of $I[\cdot]$ is also a minimizer of the functional:
\begin{align}
    J[h] =  \iiint_{\Oc} \ \frac{1}{2}|D h(\by)|^2  + h(\by) \divv(\by)
 \ \mathrm{d}V,
\end{align}
since the last term in \eqref{eq: alternate cost last line} does not affect \eqref{eq: poisson's eq div}. That is, \eqref{eq: poisson's eq div} arises as the necessary condition for optimality for both $I[\cdot]$ and  $J[\cdot]$, with the structure of $J[\cdot]$ guaranteed by Theorem~\ref{thm: dirichlet principle}. For further discussion on Euler-Lagrange equations, see  \cite{dacorogna2024introduction}.

\section{Safety-Critical Control For High-Order Systems via Poisson Safety Functions}\label{appdx: higher-order cbfs}
We generalize the discussions in Section~\ref{sec: CBFs from Poisson} and formalize them for systems defined as integrator chains \eqref{eq: linear output dynamcis1} for outputs \eqref{eq: spatial coordinates} of relative degree $r \in \mathbb{N}$.
\subsection{CBF Backstepping for High-Order Systems}
Let $r\geq 2$ and suppose $h \in C^{2+k,\alpha}(\overline{\Oc})$ is a solution to \eqref{eq: poisson's eq appndx} satisfying Theorem~\ref{thm: main thm Safety Value Function}, for some $\alpha \in (0,1)$ and $k \in \mathbb{N}_0$ such that $r \leq k+1$. Define the following function via a backstepping procedure as denoted in  \cite{cohen2024constructive, taylor2022safe}:
\begin{align}\label{eq: CBF backstepping}
h_\mathrm{B}(\bvy) = h(\mathbf{y}) - \sum_{i=1}^{r-1}\frac{1}{2\mu_i}\Big\|\mathbf{y}^{(i)} - \mathbf{k}_i 
&\Big( \mathbf{y}, \cdots, \mathbf{y}^{(i-1)}, h(\mathbf{y}),\nonumber\\[-2ex]
& \ D h(\mathbf{y}), \cdots, D^{i} h(\mathbf{y})\Big)\Big\|^2
\end{align}
where $\bk_i \in C^{r-i+1}(\re^{3i} \times \re \times \re^{3} \times \cdots \times\re^{3^{i-1}}\times\re^{3^i}; \re^3)
$
for  $1\leq i \leq r-1$ are recursively designed auxiliary safe controllers (see \cite[Theorem 5]{taylor2022safe}). The time derivatives of $\bk_i$ are of the structure:
\begin{align}
\dot{\bk}_i(\cdot) &=\sum_{j = 1}^{i}\frac{\partial \bk_i}{\partial \by^{(j-1)}}(\cdot)\by^{(j)} \\
&= \Phi_i (\mathbf{y}, \cdots, \mathbf{y}^{(i)}, h(\mathbf{y}), D h(\mathbf{y}), \cdots, D^{i+1} h(\mathbf{y})).
\end{align}
The function \eqref{eq: CBF backstepping} defines a $0$-superlevel set given by:
\begin{align}\label{eq: high-order safe set}
    \Cc_\mathrm{B} = \left\{\bvy \in \re^{3r}  \, \big| \, h_\mathrm{B}(\bvy) \geq 0 \right\} \subset \Cc \times \re^{3(r - 1)},
\end{align}
we wish to render forward invariant.
To design a controller yielding unique solutions (\ie trajectories) to \eqref{eq: linear output dynamcis1}, 
the gradients of \eqref{eq: CBF backstepping} must be at least Lipschitz continuous. Consequently, the $r$-th partial derivatives of $h$ must be Lipschitz continuous, while the higher order partial derivatives (\ie $r+1$ and beyond) can be Hölder continuous, as they are not directly used. This condition arises from the fact that Hölder continuity guarantees the existence but not uniqueness of solutions  to ODEs.
The condition $r \leq k+1$ in Lemma~\ref{lemma: Lipschitz gradients} ensures this requirement is satisfied, and enables the following theorem.

\begin{theorem}\label{thm: main thm}(Forward Invariance for High-Order Systems) Let $\Omega \subset \re^3$ be an open, bounded and connected set with piecewise smooth boundary $\partial \Oc$. 
Consider the system \eqref{eq: linear output dynamcis1} for outputs with  relative degree $r\geq 2$, and a safe set $\Cc$ defined as the $0$-super level set of a function $h: \overline \Oc \rightarrow \re$. Suppose that $h$ is the solution to \eqref{eq: poisson's eq appndx} with a forcing function $f \in C^{k,\alpha}(\Omega;\re_{< 0})$ for some $\alpha \in (0,1)$ and $k \in \mathbb{N}_0$, and consider the function \eqref{eq: CBF backstepping} defining the $0$-super level set $\Cc_\mathrm{B}$ in \eqref{eq: high-order safe set}. 
If $r \leq k +1$,
then  given $\bvy(0) = \by_0 \in \Cc_\mathrm{B}$, there exists a locally Lipchitz continuous control law $ \bw= \bk(\bvy)$  for  \eqref{eq: linear output dynamcis1} such that: 
\begin{align}\label{eq: hdot backstepping}
    \dot h_\mathrm{B}(\bvy) = Dh_\mathrm{B}(\bvy) \cdot  \big(\bA\bvy + \bB\bk(\bvy)\big) \geq -\gamma h_\mathrm{B}(\bvy)
\end{align}
for all $\bvy \in \Cc_\mathrm{B}, \gamma >0$. Thus $\Cc_\mathrm{B}$ is rendered forward invariant and \eqref{eq: CBF backstepping} is a CBF for \eqref{eq: linear output dynamcis1}. 
\end{theorem}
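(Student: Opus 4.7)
The plan is to reduce the statement to the backstepping CBF construction from \cite[Theorem~5]{taylor2022safe}, already invoked in the $r=2$ case in Section~\ref{sec: CBFs from Poisson}, by verifying its regularity hypotheses with the Poisson safety function $h$ playing the role of the base CBF. The two ingredients I would collect are (i) Theorem~\ref{thm: main thm Safety Value Function}, which gives $h \in C^{2+k,\alpha}(\Occ;\re_{\ge 0})$ as a safety function of order $2+k$ with $Dh(\by)\neq 0$ on $\pCc$, and (ii) Lemma~\ref{lemma: Lipschitz gradients}, which under the hypothesis $r \le k+1$ promotes the top-order derivative $D^r h$ to $C^{0,1}(\Occ;\re^{3^r})$.

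First, I would establish that the base layer of the integrator chain admits a locally Lipschitz safe auxiliary controller. This is precisely Proposition~\ref{lemma: relative degree 1}: the Sontag-type formula \eqref{eq: sontag} yields a locally Lipschitz $\bk_1$ satisfying the strict CBF inequality $Dh(\by)\cdot \bk_1 > -\gamma h(\by)$ on $\Cc$. I would then unpack the backstepping recursion in \eqref{eq: CBF backstepping} inductively: each auxiliary $\bk_i$ is designed to dominate the residual from layer $i-1$, depends $C^{r-i+1}$-smoothly on its arguments, and involves derivatives of $h$ only up to order $i$. Consequently the highest derivative of $h$ appearing in the closed-loop expression $\dot h_\mathrm{B}$ (through $\dot{\bk}_{r-1}$) is $D^r h$, and the condition $r \le k+1$ in Lemma~\ref{lemma: Lipschitz gradients} supplies exactly the Lipschitz regularity of $D^r h$ needed so that $D h_\mathrm{B}$ is locally Lipschitz on $\re^{3r}$.

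With this regularity in hand, I would invoke \cite[Theorem~5]{taylor2022safe} to produce, for appropriate gains $\mu_i > 0$ and class-$\Kc^e_\infty$ function $\gamma$, a locally Lipschitz feedback $\bw = \bk(\bvy)$ satisfying \eqref{eq: hdot backstepping} pointwise on $\Cc_\mathrm{B}$. Nagumo's theorem (in its CBF form) then renders $\Cc_\mathrm{B}$ forward invariant under the closed-loop integrator-chain dynamics. Since the penalty terms in \eqref{eq: CBF backstepping} are non-positive, $\Cc_\mathrm{B} \subset \Cc \times \re^{3(r-1)}$, so any trajectory initialized in $\Cc_\mathrm{B}$ stays in $\Cc$ in its spatial component throughout its domain of existence, confirming that $h_\mathrm{B}$ is a CBF for \eqref{eq: linear output dynamcis1}.

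The main obstacle, and essentially the only technical content beyond citation, is the bookkeeping of derivative orders: confirming that the recursive smoothness class $\bk_i \in C^{r-i+1}$ asserted under \eqref{eq: CBF backstepping} closes consistently against the available regularity $h \in C^{2+k,\alpha}(\Occ)$, and that the top derivative of $h$ entering $D h_\mathrm{B}$ is indeed of order $r$, so that $r \le k+1$ is exactly the sharp hypothesis. The other pieces, namely the strict inward-normal inequality on $\pCc$ from Hopf's lemma (already used in Theorem~\ref{thm: main thm Safety Value Function}), the validity of Sontag's universal formula at the base layer, and the purely algebraic recursion producing the $\bk_i$'s, are already packaged in the results cited above.
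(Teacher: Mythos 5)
Your proposal is correct and follows essentially the same route as the paper's proof: establish regularity and non-degeneracy of $h$ via Theorem~\ref{thm: main thm Safety Value Function}, identify $D^r h$ as the highest-order derivative appearing in $\dot h_\mathrm{B}$ through $\dot{\bk}_{r-1}$, apply Lemma~\ref{lemma: Lipschitz gradients} under $r \le k+1$ to get its Lipschitz continuity, and then invoke \cite[Theorem~5]{taylor2022safe} together with Nagumo's theorem. The extra remarks on the base-layer Sontag controller and the containment $\Cc_\mathrm{B} \subset \Cc \times \re^{3(r-1)}$ are consistent with the paper's surrounding discussion and do not change the argument.
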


\begin{proof} As shown in Theorem~\ref{thm: main thm Safety Value Function} and Proposition~\ref{lemma: relative degree 1}, it can verified using Hopf's Lemma and the weak minimum principle that the solution $h \in C^{2+k, \alpha}(\overline \Oc; \re_{\geq 0})$ guaranteed by Theorem~\ref{thm:regularity} satisfies $D h(\by) \neq 0$ when $h(\by)=0$, \ie when $\by \in \partial \Cc$. 
Taking the time derivative of \eqref{eq: CBF backstepping}, we have:
\begin{align}
    \dot h_\mathrm{B}(\bvy, \bw) &= Dh_\mathrm{B}(\bvy) \cdot \big(\bA\bvy + \bB\bw\big)  \\
    &\coloneqq Dh(\by)\cdot \dot{\by} - 
    \sum_{i=1}^{r-2}\frac{1}{\mu_i}
     ( \by^{(i)} - \bk_i)^\top ( \by^{(i+1)} - \dot{\bk}_i)  \nonumber \\
    &\qquad \qquad- \frac{1}{\mu_{r-1}}(\by^{(r-1)} - \bk_i)^\top ( \bw- \dot{\bk}_{r-1}), \nonumber 
\end{align}
where the time derivative $\dot{\bk}_{r-1}$ is of the form: 
\begin{align}
     \dot{\bk}_{r-1}(\cdot) &= \sum_{j = 1}^{r-1}\frac{\partial \bk_{r-1}}{\partial \by^{(j-1)}}(\cdot)\by^{j} \\
     &= \Phi_{r-1}\Big({\bvy}, h(\mathbf{y}), D h(\mathbf{y}), \cdots, D^{r} h(\mathbf{y})\Big).
\end{align}
Leveraging Lemma~\ref{lemma: Lipschitz gradients}, it follows that $D^{r} h$ is Lipschitz continuous on $\Cc$ for all $r \leq k +1$. Therefore, it follows from \cite[Theorem 5]{taylor2022safe} 
that there exists a locally Lipschitz continuous control law $ \bw= \bk(\bvy)$ satisfying \eqref{eq: hdot backstepping}. 
Therefore, Nagumo's theorem holds and $\Cc_\mathrm{B}$ is rendered forward invariant. As a result, $h_\mathrm{B}$ serves as a CBF for \eqref{eq: linear output dynamcis1}. 
\end{proof}

\subsection{High-Order Control Barrier Functions}

An alternative approach to constructing CBFs for systems of high relative degree is High-Order CBFs (HOCBFs) \cite{xiao2021high}. HOCBFs  define a CBF candidate recursively by differentiating $h$ until the input appears. Consider the system \eqref{eq: linear output dynamcis1} for outputs with a relative degree $r \geq 2$, and let $h \in C^{2+k,\alpha}(\overline{\Oc})$ be a solution to \eqref{eq: poisson's eq appndx} for some $\alpha \in (0,1)$ and $k \in \mathbb{N}_0$ such that $r \leq k+1$. Define $h_i$ recursively as follows:
\begin{align}
    h_0 &\defeq h, \\
    h_i &\defeq \dot{h}_{i-1} + \gamma_i h_{i-1}, \quad \quad i = 1,\cdots, r-1
\end{align}
with $\gamma_i > 0$ where each $h_i$ and $\dot{h}_i$ explicitly depends on:
\begin{align}
    h_i(\bty) &= h_i(\by, \cdots, \by^{(i)}, h(\by), Dh(\by), \cdots, D^ih(\by))\\
    \dot{h}_{i}(\bty)&= \sum_{j = 1}^{i+1}\frac{\partial h_i}{\partial \by^{(j-1)}}(\bty)\by^{(j)} \\
    &= \dot{h}_i(\by, \cdots, \by^{(i+1)}, h(\by), Dh(\by), ..., D^{i+1}h(\by)) \label{eq: HOCBF hdot explicit}
\end{align}
where $\bty = (\by, \cdots, \by^{(i)})$ represents the state variables up to the $i$-th derivative.
Each function $h_i$ defines a 0-superlevel set:
\[
\Cc_i = \left\{\bty \in \re^{3{i+1}}  \, \big| \,h_i(\bty) \geq 0 \right\} \quad i = 1,\cdots, r-1
\]
with $\Cc_0 = \Cc$. These sets are used to define a safe set as the intersection:
\begin{align}
    \Cc_\mathrm{H} \coloneqq \bigcap_{i=0}^{r-1} \Cc_i \subset \Cc,
\end{align}
which we aim to render forward invariant. The function:
\begin{align}\label{eq: candidate HOCBF}
    h_\mathrm{H}(\bvy) &\defeq h_{r-1}(\bvy) \\
    &= h_{r-1}(\bvy, h(\by), Dh(\by), \cdots, D^{r-1}h(\by)) \nonumber
\end{align}
is a HOCBF for \eqref{eq: linear output dynamcis1} restricted to $\Cc_\mathrm{H}$ if $\Cc_\mathrm{H}$ can be rendered forward invariant via a locally Lipschitz continuous controller $\bw= \bk(\bvy)$ satisfying:
\begin{align}\label{eq: hdot HOCBF}
    \dot h_\mathrm{H}(\bvy) = Dh_\mathrm{H}(\bvy) \cdot \big(\bA\bvy + \bB\bk(\bvy)\big) \geq -\gamma h_\mathrm{H}(\bvy)
\end{align}
for all $\bvy \in \Cc_\mathrm{H}$ with $\gamma >0$.
From \eqref{eq: HOCBF hdot explicit}, we observe that $\dot{h}_\mathrm{H}$ depends on $D^rh$. Therefore, if $h_\mathrm{H}$ is an HOCBF, then by Lemma~\ref{lemma: Lipschitz gradients}, there exists a locally Lipschitz continuous controller satisfying \eqref{eq: hdot HOCBF} if  $r \leq k+1$. However, to guarantee the existence of such a controller, HOCBFs further assume a uniform relative degree by requiring $Dh_\mathrm{H}(\bvy)\bB \neq 0$ for all $\bvy \in \Cc_\mathrm{H}$. However, one can verify that: 
\begin{align}
    Dh_\mathrm{H}(\bvy)\bB = Dh(\by),
\end{align}
and because the set $\Cc$ generated by \eqref{eq: poisson's eq appndx} is compact, the condition $Dh_\mathrm{H}(\bvy)\bB \neq 0$ is not guaranteed to hold for all $\bvy \in \Cc_\mathrm{H} \subset \Cc$. This leads to a notion of \textit{weak} relative degree \cite{XiaoWeakTAC} and requires further technical development to guarantee the forward invariance of $\Cc_\mathrm{H}$ as addressed in \cite{ong2024rectified}. Nevertheless, $h \in C^{2+k, \alpha}(\overline{\Oc})$ can be used to construct a \textit{candidate} HOCBF, $h_\mathrm{H}$ as defined in \eqref{eq: candidate HOCBF}.
\end{document}